\newtheorem{theorem}{Theorem}
\newtheorem{definition}{Definition}
\newtheorem{lemma}{Lemma}
\newtheorem{proposition}{Proposition}
\newtheorem{assumption}{Assumption}
\title{When Does MAML Objective Have Benign Landscape?}
\author {

        Igor Molybog,\textsuperscript{\rm 1}
        Javad Lavaei \textsuperscript{\rm 1}\\
}
\begin{document}
\maketitle

\begin{abstract}
The paper studies the complexity of the optimization problem behind the Model-Agnostic Meta-Learning (MAML) algorithm. The goal of the study is to determine the global convergence of MAML on sequential decision-making tasks possessing a common structure. We are curious to know when, if at all, the benign landscape of the underlying tasks results in a benign landscape of the corresponding MAML objective. For illustration, we analyze the landscape of the MAML objective on LQR tasks to determine what types of similarities in their structures enable the algorithm to converge to the globally optimal solution.
\end{abstract}

\section{Introduction}

Meta-learning, along with transfer learning, is a rapidly developing research area in machine learning which aims to design algorithms that gain computational advantages out of inherent similarities between learning problem instances, otherwise referred to as tasks. In this work, we study one of the most popular meta-leraning algorithms, called {\it Model-Agnostic Meta-Learning (MAML)}, which has been developed by \citet{finn2017model}. In the reinforcement learning domain, the algorithm is expected to rapidly adapt a pre-learned policy to a new task. However, it remains unclear how to measure the quality of adaptation and what tasks are suitable for the meta-learning. In the core of meta-learning, there is an optimization problem that is concerned with the expected return averaged among considered tasks. In this paper we propose to measure the performance of MAML with the optimality gap of the corresponding optimization problem. We consider a space of tasks to be suitable for meta-learning if the algorithm converges to a global optimizer of the meta-learning objective or to a point with a similar value. This approach enables distinguishing those meta-learning problems that are solvable by MAML from the other problems. Intuitively, a particular algorithm should perform satisfactorily on a set of meta-learning problems that consist of tasks united by a particular type of similarity. For the purpose of demonstration, we consider linear quadratic control problems, although our theory applies to a broad class of RL tasks with benign optimization landscape. We aim to theoretically study the global convergence properties of the original MAML algorithm on sequential decision-making tasks. In short, our findings can be summarised as follows:

\begin{itemize}
    \item Meta-Learning objective inherits a benign landscape from the objectives of the individual tasks if they are similar pointwise. As a result, the original MAML and other meta-learning algorithms that rely on local search are guaranteed to perform well on the corresponding problems.
    \item As a strongly negative result, those problems consisting of linear quadratic tasks that coincide up to a scaling of the reward function are not solvable by MAML. We propose an alternative scheme that addresses the issue with this type of similarity. 
\end{itemize}

For clarity of explanation, we investigate discrete stationary infinite-horizon decision problems and note that the generalization of the results to non-stationary and finite-horizon cases is straightforward. A stationary discrete dynamical system is described as
\[
s_{t+1} \sim T(s_t,a_t) \, ,
\]
where $T$ is a probability distribution over the next state $s_{t+1}\in \mathbb R^d$ given the current state $s_t\in \mathbb R^d$ and action $a_t \in \mathbb R^r.$ The initial state $s_0$ is assumed to follow the distribution $T_0$. The
objective of the infinite-horizon problem is to find a control input $a_t$ minimizing the total discounted cost (the negation of the reward)
\begin{eqnarray*}
&\textrm{minimize} & \mathbb E_{s_0\sim T_0}\sum_{t=0}^\infty \gamma^t c(s_t,a_t)\\
&\textrm{subject to} & s_{t+1} \sim T(s_t,a_t) \;\; t=0,1,\ldots 
\end{eqnarray*}
where $\gamma\in (0,1]$ is the discount factor that is assumed to be $1$ for the LQR problem. 

\subsection{Linear-Quadratic Regulator}\label{sec:LQR}

One of the important examples of decision-making problems is related to the control of linear dynamical systems with a quadratic objective, referred to as {\it linear-quadratic regulator (LQR)}.
LQR and iLQR \citep{todorov2005generalized} are fundamental tools for model-based reinforcement learning. At the same time, linear-quadratic systems enjoy a rich theoretical foundation with a large number of provable guarantees, which makes them the perfect benchmark for the mathematical analysis of novel learning algorithms.

We consider the following infinite-horizon exact LQR problem: 
\begin{equation}\label{eqn:LQR_obj}
\begin{aligned}
&\textrm{minimize} & \mathbb E_{x_0}\left[\sum_{t=0}^\infty (s_t^\top Q s_t + a_t^\top R a_t )\right] \\
&\textrm{subject to} & s_{t+1} = As_t + B a_t \, ,
\end{aligned}
\end{equation}

Assuming that the matrices $A$ and $B$ are such that the optimal cost is finite, it is a classic result that the optimal control policy is deterministic and linear in the state, i.e., 
\[
a_t = -W^* s_t 
\]
where $W^* \in \mathbb R^{r\times d}$ \citep{bertsekas2017dynamic}. Moreover, the matrix $W^*$ can be found from the model parameters by solving the Algebraic
Riccati Equation (ARE)
\begin{equation*}\label{eq:ARE}
P= A^{\top} P A + Q - A^{\top} PB(B^{\top} PB+R)^{-1} B^{\top} P A \, ,
\end{equation*}
and substituting the positive-definite root $P$ into 
\begin{equation*} \label{eq:K_from_P}
W^{*} = (B^{\top} P B + R)^{-1} B^{\top} P A.
\end{equation*}
This implies that in order to find the solution of LQR, it suffices to only search over deterministic policies of the form $a=-Ws$ parameterized with a matrix $W\in \mathbb R^{r\times d}.$

In an effort to build a bridge between practical RL algorithms and the optimal control theory, \citet{fazel2018global} shows that $W^{*}$ can be found by applying the policy gradient algorithm to a reformulated cost function. The LQR cost of a linear deterministic policy with respect to $W$ can be defined as
\begin{eqnarray*}
C(W) := \mathbb E_{s_0\sim T_0}\left[ \sum_{t=0}^\infty (s_t^\top Q s_t + a_t^\top R a_t) \right]
\end{eqnarray*}
where $a_t=-Ws_t$ and $s_{t+1} = (A-BW)s_{t}.$ This can be reformulated as
\[C(W) = \mathbb E_{s_0\sim T_0 }  \, s_0^\top P_W s_0 \, \]
where $P_W$ is the solution of $P_W = Q + W^\top R W + (A-BW)^\top P_W  (A-BW).$
We only consider the cost of stable policies, and assume the cost to be infinite for unstable ones.

\subsection{Model-Agnostic Meta-Learning (MAML)}

Given a set of tasks $\mathcal T,$ each represented by an objective function $\mathcal L_{\tau},$ and a probability distribution $\mathbb P_{\mathcal T}$ over the tasks, \citet{finn2017model} proposes an algorithm for finding an initialization of the policy gradient method that allows a fast adaptation to a task through just several gradient updates. In case the task consists in regression, classification, or clusterization, $\mathcal L_{\tau}$ is a risk or an empirical risk. In case of a reinforcement learning task, $\mathcal L_{\tau}$ is the cost (the negated return) of a policy.

If the space of considered policies is parameterized via $w\in \mathcal W,$ then a single-shot MAML (which uses just one gradient update) aims to minimize the objective
\begin{equation} \label{eqn:maml_obj}
    \mathbb E_{\tau\in \mathcal T} f_{\tau}(w - \eta \nabla g_{\tau}(w))
\end{equation}
where $f_\tau$ and $g_\tau$ can be two different approximations of the objective function of the task $\tau.$ Similarly, a multi-shot version applies multiple gradient updates within the adaptation procedure. For example, if $\mathcal L_{\tau}$ is the risk of a learning problem, then $f_\tau$ can be the empirical risk conditioned on a large dataset, while $g_\tau$ is the empirical risk conditioned on a smaller dataset. The generality of this formulation will be used in the Main results Section, but for the study of LQR we will assume that the functions $\mathcal L_\tau,$ $f_\tau$ and $g_\tau$ all coincide and are equal to $C(W)$. When it is clear from the context which task is being discussed, we will omit the subscript.

Being based upon gradient descent with a constant step size, Algorithm \ref{alg:maml} is a basic version of a few-shot MAML although other versions have been developed in the literature, e.g. FO-MAML, HF-MAML \citep{fallah2019convergence}, iMAML \citep{rajeswaran2019meta}, Reptile \citep{nichol2018first} and FTML \citep{finn2019online}. We do not directly address the other formulations in the paper, but the conclusions of this work are applicable to them as well since they are created to minimize essentially the same objective function \eqref{eqn:maml_obj}
and the convergence to at least a first-order stationary point has been proven for the majority of these variants. 

An interesting modification of MAML is based on exact proximal optimization as an alternative to a gradient update. \citet{zhou2019efficient} proposes the proximal update MAML algorithm with the adaptation procedure that finds the best set of parameters in the neighborhood of initialization,
while \citet{wang2020global} proves that a similar idea can be shaped into a meta-RL algorithm that is proved to converge globally under some overparametrization assumption. However, the exact optimization in the adaptation procedure may be problematic in meta-learning setup, since adaptation is supposed to be fast, meaning that there are sharp constraints on sample complexity and the amount of computation allowed for it.

\begin{algorithm}[t]
\caption{Model-Agnostic Meta-Learning (MAML)}
\label{alg:maml}
\begin{algorithmic}[1]
\REQUIRE $p(\mathcal T)$: Probabilistic task generator
\REQUIRE $\eta$, $\beta$: Step size hyperparameters
\STATE Randomly initialize $\theta$
\WHILE{not done}
\STATE Sample batch of tasks $\mathcal T_i \sim p(\mathcal T)$
  \FORALL{$\mathcal T_i$}
 \STATE Evaluate $\nabla g_{\mathcal T_i}(w)$
 \STATE Compute adapted parameters with gradient descent: $w_i'=w-\eta \nabla_w g_{\mathcal T_i}(w)$
 \ENDFOR
 \STATE Update $w \leftarrow w - \beta \nabla_w \sum_{\mathcal T_i \sim p(\mathcal T)} f_{\mathcal T_i} ( w_i')$
\ENDWHILE
\end{algorithmic}
\end{algorithm}


As meta-learning seeks to improve learning performance by exploiting similarities between tasks, it is important to understand what type of similarities a particular meta-learning algorithm can take advantage of. A highly desirable feature of a meta-learning algorithm is an acceptable meta-test performance at least on the tasks it has been meta-trained on. In other words, if the set of tasks $\mathcal T$ is finite and the meta-training procedure has access to all of them, then it should succeed on the meta-testing stage. For MAML, this translates into a requirement of successful minimization of the objective \eqref{eqn:maml_obj}. \citet{finn2019online} shows that the global minimum is achieved in case the functions $f_\tau=g_\tau$ are all smooth and strongly convex. However, the objective of many practical decision-making tasks are not convex, although may possess benign landscape, like the LQR objective \eqref{eqn:LQR_obj}. Tasks with non-convex objectives are substantially harder to analyze, and thus \citet{fallah2019convergence} only shows convergence of MAML to a first-order stationary point of \eqref{eqn:maml_obj} if $f_\tau$ are non-convex. We aim to study the global convergence properties of MAML applied to tasks with non-convex objectives. 

The landscape of \eqref{eqn:LQR_obj} has been studied by \citet{fazel2018global}. They note that there exist instances of LQR that are not convex, quasi-convex, or star-convex, which means that none of the existing results on global convergence of MAML can be applied even to such a basic decision problem as LQR. However, \eqref{eqn:LQR_obj} possess benign landscape, and our study shows that this property can be transferred to \eqref{eqn:maml_obj}. 

The purpose of this paper is not to present a novel algorithm or conduct a study of its application on a specific real-world case, but rather to prove the basic properties of a popular existing algorithm for a vast variety of cases. 
Meta-Learning algorithms are supposed to capture similarities between tasks, although there is no clear way to determine whether or not a similarity has been captured. For MAML, we propose a criterion that is based on the properties of the landscape of the optimization associated with MAML.
Specifically, we declare that a version of MAML captures the similarities between the tasks in $\mathcal T$ if the objective of the algorithm on $\mathcal T$ has a benign landscape. Otherwise, the tasks in $\mathcal T$ are recognized to be too distinct for this particular version of MAML.

The advantage of this criterion is that it correlates with the computational complexity of the problem that MAML aims to solve. If the objective function has benign landscape, then the optimization problem has a low computational complexity and one can solve it by a local search method, which is implemented within MAML. If the objective does not have benign landscape, then MAML can become stuck in a spurious local minimum, which can potentially be arbitrarily worse than the optimal solution.

A drawback of this approach is that it does not allow to compare Meta-Learning algorithms against each other, since it does not take into account the computational complexity of the adaptation algorithm. 
In this paper, we do not consider this aspect because of our focus specifically on the few-shot MAML. To the best of our knowledge, this is the first paper to propose a systematic way of reasoning about the general theory of meta-adaptation between RL tasks.
We consider MAML primarily with applications to linear-quadratic systems because they are realistic and yet easy to analyze although our conclusions go far beyond this application.

\subsection*{Notation}
Given a set $\mathcal Z\subseteq \mathbb R^n$ and a point $\bar z\in \mathcal Z,$ define the {\it open neighborhood} $\mathcal{B}_{\delta}({\bar z})=\{z\in \mathcal Z| \|z-\bar z\|_2<\delta\}.$ Given a function $\ell:\mathcal Z\to \mathbb R,$ we call $\bar z\in \mathcal Z$ a {\it local minimizer} of the function $\ell$ if there exists $\delta>0$ such that $\ell(z)\ge \ell(\bar z)$ for all $z\in \mathcal B_{\delta}(\bar z).$ The value $\ell(\bar z)$ in this case is called a {\it local minimum}. A {\it global minimizer} of $\ell$ is a point $\bar z$ such that $\ell(z)\ge \ell(\bar z)$ for all $z\in \mathcal Z.$ The {\it global minimum} is a value $\min_{z\in \mathcal Z}\ell(z)$ such that $\ell(z)\ge \min_{z\in \mathcal Z}\ell(z)$ for all $z\in \mathcal Z.$ A {\it spurious local minimum} is a local minimum that is not a global minimum. Given a differentiable function $\ell:\mathcal Z\to \mathbb R,$ a {\it first-order stationary point} $\bar z\in \mathcal Z$ is such that $\nabla \ell (\bar z)=0.$ Given a twice differentiable function $\ell,$ a first-order stationary point $\bar z$ such that $\nabla \ell (\bar z) \succeq 0$ is called a {\it second-order stationary point}. Given a matrix $A\in \mathbb R^{n\times m},$ we denote its transpose as $A^\top$ and its operotor norm in the space $\ell_2$ as $\|A\|.$ For a vecor $v$ from an $\ell_2$-space, its norm is denoted with $\|v\|.$ Cardinality of the set $\mathcal T$ is denoted with $|\mathcal T|.$

\section{Main results}\label{sec:results}

In this section, we study the MAML algorithm under four different scenarios. We consider MAML applied to a single task and to several identical tasks. After that, we introduce a metric between tasks and extend the study to a number of close tasks, and, finally, we study the convergence of MAML on a large number of distant LQR tasks.

We provide theoretical results for general multi-dimensional systems, while all of the presented examples and counter-examples are on one-dimensional LQR tasks since they are easy to visualize. Note that these examples are extendable to multi-dimensional systems as well. The details on the exact tasks used for the computations are provided in the Appendix.

\subsection{Single task}
We begin by analyzing MAML applied to a singleton task set $\mathcal T.$ If MAML fails under this scenario, then there is little hope on its global convergence for the multiple-task scenario. For the single task, we rewrite the MAML objective \eqref{eqn:maml_obj} as
\[h(w) = f(w-\eta\nabla g(w))\]
where $f$ is assumed to be a continuously differentiable function and $g$ is assumed to be twice continuously differentiable. As noticed in the Introduction, the existing results on global convergence of MAML are not applicable to LQR. Figure \ref{img:maml_single_task} demonstrates an example of the MAML objective \eqref{eqn:maml_obj} applied to a single LQR task. It is non-convex and has three distinct strict local minimizers. Nevertheless, all of these three points are also global minimizers, which implies that Algorithm \ref{alg:maml} would converge to its global minimizer from almost any initial point. The minimizer in the middle corresponds to $W^*$ of the task, while the rightmost and leftmost minimizers are some points $W$ such that $W-\eta\nabla C(W) = W^*$ and $\nabla C(W)\ne 0.$ The minimizers on the both sides rely on the rapid adaptation during the meta-testing stage, which has been assumed by the creators of the algorithm. 

\begin{figure}
\centering
    \includegraphics[width=1\linewidth]{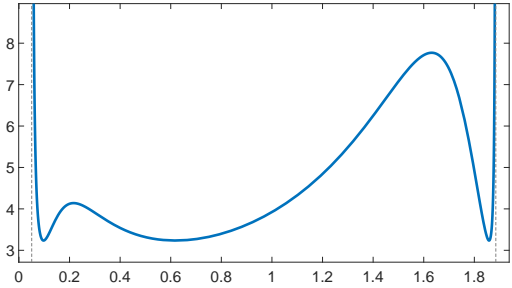}

\caption{\label{img:maml_single_task}
MAML objective \eqref{eqn:maml_obj} on a single LQR task
}
\end{figure}

This example gives rise to a hypothesis that the MAML objective for a single LQR possesses some sort of benign landscape and, more generally, that benign landscape of the cost of the underlying task results in benign landscape of the resulting MAML objective. Following \citet{josz2018theory}, we formalize the notion of benign landscape by defining the global function:
\begin{definition}
A continuous function $\ell:\mathcal Z\to \mathbb R$ is called global if every local minimizer of $\ell$ is a global minimizer.
\end{definition}
This property is also referred as having no spurious local minima and generalizes the notions of convexity, quasiconvexity, and star-convexity. As a relaxation of this property, we define $\varepsilon$-global function:
\begin{definition}
A continuous function $\ell:\mathcal Z\to \mathbb R$ is called $\varepsilon$-global if for every local minimizer $\bar z$ of $\ell$ it holds that
\[\ell(\bar z)-min_{z\in \mathcal Z}\ell(z) \le \varepsilon\]
\end{definition}
Being global is equivalent to being $0$-global. This property is more likely to be satisfied than a perfect no-spurious property for cost functions coming from real-world applications, and it still implies that the landscape is benign. These two notions of benign landscape characterize when a coercive function is easy to optimize using local optimization methods based on greedy descent. The following theorem shows that the benign landscape of the cost of the underlying task does indeed lead to a benign landscape for the resulting MAML objective.

\begin{theorem}\label{thm:global_maml}
Let $f:\mathcal W\to \mathbb R$ be global and $g:\mathcal W\to \mathbb R$ be twice continuously differentiable with $\|\nabla^2 g(w)\|\le M<\infty$ for all $w\in \mathcal W.$ If Algorithm \ref{alg:maml} with the parameter $\eta$ chosen to be smaller than $\frac{1}{M}$ converges to a local miminizer $w^\star \in \mathcal W$ of $h,$ then $w^\star$ is a global minimizer of $h.$
\end{theorem}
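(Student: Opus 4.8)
The plan is to recognize that, for the prescribed step size, the adaptation map $\Phi(w) := w - \eta\nabla g(w)$ — through which the MAML objective factors as $h = f\circ\Phi$ — is an \emph{open} map on $\mathcal W$. Once that is in place, the argument is short: push the assumed local minimizer $w^\star$ of $h$ forward through $\Phi$, show the image is a local minimizer of $f$, invoke globality of $f$ to upgrade it to a \emph{global} minimizer of $f$, and then pull back along $h = f\circ\Phi$ to conclude that $w^\star$ is a global minimizer of $h$.

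The first and main step is the openness of $\Phi$. Since $g$ is $C^2$, the map $\Phi$ is $C^1$ with Jacobian $D\Phi(w) = I - \eta\nabla^2 g(w)$. The Hessian $\nabla^2 g(w)$ is symmetric with $\|\nabla^2 g(w)\| \le M$, so its eigenvalues lie in $[-M, M]$ and hence the eigenvalues of $D\Phi(w)$ lie in $[1 - \eta M,\, 1 + \eta M]$. The hypothesis $\eta < 1/M$ forces $1 - \eta M > 0$, so $D\Phi(w) \succ 0$ — in particular invertible — at \emph{every} point $w \in \mathcal W$. By the inverse function theorem $\Phi$ is then a local diffeomorphism at each point of $\mathcal W$, and therefore maps open subsets of $\mathcal W$ to open sets.

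With that in hand I would finish as follows. Let $w^\star$ be a local minimizer of $h$, so $h(w) \ge h(w^\star)$ on some ball $\mathcal B_\delta(w^\star)$. Then $\Phi(\mathcal B_\delta(w^\star))$ is an open neighborhood of $\Phi(w^\star)$, and every point $v = \Phi(w)$ in it satisfies $f(v) = f(\Phi(w)) = h(w) \ge h(w^\star) = f(\Phi(w^\star))$; thus $\Phi(w^\star)$ is a local minimizer of $f$. Since $f$ is global, $\Phi(w^\star)$ is a global minimizer of $f$. Finally, for any $w$ in the domain of $h$ the point $\Phi(w)$ lies in the domain of $f$, so $h(w) = f(\Phi(w)) \ge f(\Phi(w^\star)) = h(w^\star)$, which is exactly the claim.

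The hard part is precisely the openness of $\Phi$, and it is where the quantitative hypotheses are used: I need $D\Phi$ invertible \emph{uniformly over all of $\mathcal W$}, which is what the global Hessian bound $\|\nabla^2 g\| \le M$ together with $\eta < 1/M$ delivers — a merely local or pointwise bound would not guarantee that $\Phi$ carries neighborhoods to neighborhoods. The argument also quietly uses that $\mathcal W$ is open (so the inverse function theorem applies and $\Phi(\mathcal B_\delta(w^\star))$ is genuinely open) and that $h$ is defined on a neighborhood of $w^\star$ (so $\Phi$ maps that neighborhood into the domain of $f$); note that it uses only continuity — not differentiability — of $f$, which matches the LQR setting where $f = C(\cdot)$ is coercive but not globally differentiable. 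If I wanted to avoid the inverse function theorem I could instead establish, via the mean value inequality $\|\Phi(w_1) - \Phi(w_2)\| \ge (1 - \eta M)\|w_1 - w_2\|$, that $\Phi$ is injective on convex pieces and open by a continuity/degree argument, but the local-diffeomorphism route is cleaner.
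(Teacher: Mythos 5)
Your proposal is correct and follows essentially the same route as the paper: you show the adaptation map $\mathcal F(w)=w-\eta\nabla g(w)$ has an everywhere positive-definite Jacobian $I-\eta\nabla^2 g(w)$ under $\eta<1/M$, invoke the inverse function theorem to get openness, and then transfer the local minimizer of $h=f\circ\mathcal F$ forward to a local (hence global) minimizer of $f$ and pull back. The only difference is presentational: the paper cites Observation 1 of \citet{nouiehed2018learning} for the push-forward step, whereas you verify that step directly.
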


In the context of LQR, Theorem \ref{thm:global_maml} results in the statement below.

\begin{theorem}\label{thm:global_maml_LQR}
Let $w^\star\in \mathbb R^{r\times d}$ be the limit point of the sequence produced by Algorithm \ref{alg:maml} (MAML) with $\eta<\frac{1}{\|\nabla^2C(w^\star)\|_2}$ applied to a single LQR task, meaning that $h(W)=C(W-\eta \nabla C(W)).$ Then, $w^\star$ is the global minimizer of $h,$ which implies that $C(W-\eta \nabla C(W))$ is a global function.
\end{theorem}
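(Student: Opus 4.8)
The plan is to obtain Theorem~\ref{thm:global_maml_LQR} from Theorem~\ref{thm:global_maml} with the specialization $f=g=C$, after verifying its hypotheses for the LQR cost and repairing the one that does not hold globally. Theorem~\ref{thm:global_maml} requires (i) $f$ to be global and (ii) $g$ to be twice continuously differentiable with a uniformly bounded Hessian. Here $f=g=C$, so I need: $C$ is a global function on the stabilizing set $\mathcal W\subseteq\mathbb R^{r\times d}$, and $C$ is $C^2$ on $\mathcal W$. The first I would take from \citet{fazel2018global}: the LQR cost satisfies a gradient-domination (Polyak--\L{}ojasiewicz-type) inequality $C(W)-C(W^*)\le c\,\|\nabla C(W)\|^2$, so every first-order stationary point of $C$ is globally optimal; since $\mathcal W$ is open and $C$ is differentiable there, every local minimizer of $C$ is stationary, hence global, i.e.\ $C$ is global. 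The second is standard: on the open set $\mathcal W$ the solution $P_W$ of the Lyapunov equation $P_W=Q+W^\top RW+(A-BW)^\top P_W(A-BW)$ depends analytically on $W$ (the map $X\mapsto (A-BW)^\top X(A-BW)$ has spectral radius less than one), so $C$ is real-analytic, in particular $C^2$, on $\mathcal W$.

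The real gap is the \emph{uniform} Hessian bound in (ii): $\|\nabla^2 C(W)\|_2$ is unbounded on $\mathcal W$ because it blows up near the boundary of the stability region, so Theorem~\ref{thm:global_maml} cannot be invoked as a black box. I would localize around the limit point $w^\star$. Since $w^\star$ lies in the interior of $\mathcal W$ and $\eta<1/\|\nabla^2 C(w^\star)\|_2$, continuity of $W\mapsto\|\nabla^2 C(W)\|_2$ yields a ball $\mathcal B_\delta(w^\star)\subseteq\mathcal W$ on which $\eta\|\nabla^2 C(W)\|_2<1$ still holds. On that ball the Jacobian $I-\eta\nabla^2 C(W)$ of the adaptation map $\Phi(W):=W-\eta\nabla C(W)$ is invertible (Neumann series), so by the inverse function theorem $\Phi$ restricts --- after possibly shrinking $\delta$ --- to a diffeomorphism of a neighborhood of $w^\star$ onto an open neighborhood of $\Phi(w^\star)$; being an open map near $w^\star$ is exactly what the proof of Theorem~\ref{thm:global_maml} uses.

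With that in place I would rerun the argument of Theorem~\ref{thm:global_maml} locally. As a local minimizer of $h=C\circ\Phi$, the point $w^\star$ satisfies $h\ge h(w^\star)$ on some $\mathcal B_{\delta'}(w^\star)$; since $\Phi$ maps $\mathcal B_{\delta'}(w^\star)$ onto a neighborhood $\mathcal V$ of $\Phi(w^\star)$, for every $v\in\mathcal V$ choosing $W$ with $\Phi(W)=v$ gives $C(v)=C(\Phi(W))=h(W)\ge h(w^\star)=C(\Phi(w^\star))$, so $\Phi(w^\star)$ is a local minimizer of $C$. Globality of $C$ then makes $\Phi(w^\star)$ a global minimizer, so $C(\Phi(w^\star))=\min_{\mathcal W}C$, and therefore $h(W)=C(\Phi(W))\ge\min_{\mathcal W}C=C(\Phi(w^\star))=h(w^\star)$ for all $W\in\mathcal W$: $w^\star$ is a global minimizer of $h$. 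Since the same reasoning applies to any local minimizer of $h$ obeying the step-size condition, $h=C(W-\eta\nabla C(W))$ is a global function. The main obstacle is precisely the lack of a uniform Hessian bound for the LQR cost, which is why the proof has to be carried out in a neighborhood of $w^\star$ instead of applying Theorem~\ref{thm:global_maml} directly; the remainder is bookkeeping plus the landscape result of \citet{fazel2018global}.
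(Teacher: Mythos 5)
Your verification of the ingredients (globality of $C$ via gradient domination, smoothness of $C$ on the stabilizing set, and the need to localize because $\|\nabla^2 C\|$ is unbounded near the boundary of the stability region) is sound, and the localization around $w^\star$ is a legitimate---in fact necessary---refinement that the paper's own proof glosses over. The genuine gap is at the start of your third paragraph: you assume that the MAML limit point $w^\star$ is a \emph{local minimizer} of $h=C\circ\Phi$. The hypothesis of Theorem \ref{thm:global_maml_LQR} only says that $w^\star$ is a limit point of the sequence produced by Algorithm \ref{alg:maml}; what this buys you (and what the paper invokes, via \citet{fallah2019convergence}) is merely first-order stationarity, $\nabla h(w^\star)=0$. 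A limit point of a gradient-based method need not be a local minimizer, and nothing in your argument rules out a saddle point of $h$, so the open-map route through Lemma \ref{thm:loc_open}---which genuinely requires local minimality---does not apply to the point the theorem is actually about. This is precisely why Theorem \ref{thm:global_maml} carries the explicit hypothesis ``converges to a local minimizer'' while Theorem \ref{thm:global_maml_LQR} does not.

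The paper's proof avoids this by exploiting the LQR structure you quoted but did not use at this step: by the chain rule, $0=\nabla h(w^\star)=[\mathcal I-\eta\nabla^2 C(w^\star)]^\top\nabla C(W)\big|_{W=w^\star-\eta\nabla C(w^\star)}$, and since $\eta<1/\|\nabla^2 C(w^\star)\|_2$ makes the Jacobian factor nonsingular, the adapted point $w^\star-\eta\nabla C(w^\star)$ is a first-order stationary point of $C$; the landscape result of \citet{fazel2018global} (every stationary point of $C$ is a global minimizer, which is exactly the gradient-domination fact in your first paragraph) then makes it a \emph{global} minimizer of $C$, and $\min_W C(W)\le\min_W C(W-\eta\nabla C(W))$ finishes the argument. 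So the repair is available with tools already in your write-up: replace ``local minimizer of $h$'' by the stationarity computation above; your local-openness argument is then only needed for the secondary claim that $h$ is a global function (where local minimality is part of the definition), not for the claim about the MAML limit point.
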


Both Theorem \ref{thm:global_maml} and \ref{thm:global_maml_LQR} are proven in the appendix, and their proofs rely significantly on the following technical Lemma:
\begin{lemma}\label{thm:loc_min_open}
Let $\ell:\mathcal Z \to \mathbb R$ be an $\varepsilon$-global function, and consider a continuous map $\mathcal{F}: \mathcal W \to \mathcal Z$ with $\mathcal Z=\text{range}(\mathcal F)$ is locally open at a local minimizer $\bar w$ of $\ell\circ \mathcal F.$ Then, it holds that 
\[\ell(\mathcal F(\bar w))-\min_{w\in \mathcal W} \ell(\mathcal F(w)) \le \varepsilon\]
\end{lemma}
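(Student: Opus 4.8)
The plan is to reduce the statement to a single observation: under the hypotheses, $\mathcal F(\bar w)$ must itself be a local minimizer of $\ell$ on $\mathcal Z$, and then the $\varepsilon$-global property of $\ell$ closes the argument in one line. So the whole proof is a short ``transfer of local minimality along $\mathcal F$'' followed by an application of the definition.

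First I would unwind the hypothesis that $\bar w$ is a local minimizer of $\ell\circ\mathcal F$: there is some $\delta>0$ with $\ell(\mathcal F(w))\ge \ell(\mathcal F(\bar w))$ for every $w\in\mathcal B_{\delta}(\bar w)\subseteq\mathcal W$. Next I invoke that $\mathcal F$ is locally open at $\bar w$, i.e. the image of every neighborhood of $\bar w$ is a neighborhood of $\mathcal F(\bar w)$ in $\mathcal Z$; applied to $\mathcal B_{\delta}(\bar w)$ this yields a $\delta'>0$ such that $\mathcal B_{\delta'}(\mathcal F(\bar w))\subseteq \mathcal F(\mathcal B_{\delta}(\bar w))$, where the ball is taken in $\mathcal Z=\mathrm{range}(\mathcal F)$. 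Then for an arbitrary $z\in\mathcal B_{\delta'}(\mathcal F(\bar w))$ I choose a preimage $w\in\mathcal B_{\delta}(\bar w)$ with $\mathcal F(w)=z$ and conclude $\ell(z)=\ell(\mathcal F(w))\ge \ell(\mathcal F(\bar w))$. Hence $\mathcal F(\bar w)$ is a local minimizer of $\ell$ on $\mathcal Z$.

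To finish, since $\ell$ is $\varepsilon$-global and $\mathcal F(\bar w)\in\mathcal Z$ is a local minimizer of $\ell$, the defining inequality of $\varepsilon$-globality gives $\ell(\mathcal F(\bar w))-\min_{z\in\mathcal Z}\ell(z)\le\varepsilon$; and because $\mathcal Z=\mathrm{range}(\mathcal F)$ we have $\min_{z\in\mathcal Z}\ell(z)=\min_{w\in\mathcal W}\ell(\mathcal F(w))$, which is exactly the asserted bound. (Note that the existence of this minimum is already part of the definition of an $\varepsilon$-global function, so no extra assumption is needed.)

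\textbf{Main obstacle.} The only real subtlety is getting the topology right in the local-openness step: the neighborhood $\mathcal B_{\delta'}(\mathcal F(\bar w))$ must be taken relative to $\mathcal Z=\mathrm{range}(\mathcal F)$, not in the ambient Euclidean space, since otherwise the claim ``$\mathcal F(\bar w)$ is a local minimizer of $\ell$'' would be false whenever $\mathcal Z$ is a proper lower-dimensional subset. Everything else (continuity of $\mathcal F$ and $\ell$, the dependence of $\delta'$ on $\delta$) is harmless bookkeeping: we only need one matching pair $(\delta,\delta')$, and no quantitative control over $\delta'$ is required.
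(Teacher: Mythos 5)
Your argument is correct and follows essentially the same route as the paper: transfer local minimality of $\ell\circ\mathcal F$ at $\bar w$ to local minimality of $\ell$ at $\mathcal F(\bar w)$ via local openness, then apply the definition of $\varepsilon$-globality together with $\mathcal Z=\mathrm{range}(\mathcal F)$. The only difference is that you prove the transfer step directly (correctly taking the neighborhood relative to $\mathcal Z$), whereas the paper cites it as Observation 1 of \citet{nouiehed2018learning} (Lemma \ref{thm:loc_open}).
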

Theorem \ref{thm:global_maml} has implications far beyond the study of LQR. For example, Theorem 4.3 by \citet{zhang2019policy} states that, under certain conditions, the objective of $\mathcal H_2$ linear control with $\mathcal H_{\infty}$ robustness guarantee is a global function. Applying Theorem \ref{thm:loc_min_open} to this objective yields that MAML on the mixed $\mathcal H_2/\mathcal H_{\infty}$ state-feedback control design will also have no spurious local minima under the corresponding conditions.
It is also applicable in case $f$ is not a reinforcement learning objective but an objective of a regression or a classification task. It also has a straightforward extension to the multi-shot MAML and other variants of the MAML algorithm.
Theorem \ref{thm:global_maml} can also be viewed as a practical guideline for proving global convergence post-factum. If the function $f$ is global and one runs the Algorithm \ref{alg:maml} with a parameter $\eta$ that converges to a point $w^\star$ such that $\nabla h(w^\star)=0,$ then one can check whether $\nabla^2 h(w^\star)\succ0$ and $\eta<\frac{1}{\|\nabla^2 g(w^\star)\|_2}$ and if so, $w^\star$ is guaranteed to be a global minimizer of $h.$
In case $f$ is not global but its landscape has benign properties, then the following generalization takes place:
\begin{proposition}\label{prop:epsiglob}
If $f$ is $\varepsilon$-global for some $\varepsilon>0$ and $w^{\star}\in \mathcal W$ is a local minimum
and $\eta<\frac{1}{\|\nabla^2 g(w^\star)\|_2},$ then
\[h(\bar w) - \min_{w\in \mathcal W} h(w) \le \varepsilon\]
\end{proposition}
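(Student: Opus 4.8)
The plan is to obtain this as a localized version of Theorem~\ref{thm:global_maml}: rerun the same argument, but require the Hessian bound only at the candidate point $w^\star$ (which is the statement's $\bar w$) rather than uniformly over $\mathcal W$. Introduce the \emph{adaptation map} $\mathcal F:\mathcal W\to\mathcal W$, $\mathcal F(w):=w-\eta\nabla g(w)$, so that $h=f\circ\mathcal F$; since $g$ is twice continuously differentiable, $\mathcal F$ is continuously differentiable. Everything then reduces to transferring the local minimality of $h$ at $w^\star$ through $\mathcal F$ to the point $\mathcal F(w^\star)$ and invoking the $\varepsilon$-globality of $f$ — this is exactly the mechanism of Lemma~\ref{thm:loc_min_open}, provided $\mathcal F$ is locally open at $w^\star$.

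The key step is that local openness. The Jacobian of $\mathcal F$ at $w^\star$ is $D\mathcal F(w^\star)=I-\eta\nabla^2 g(w^\star)$, and since the symmetric matrix $\nabla^2 g(w^\star)$ has all eigenvalues in $[-\|\nabla^2 g(w^\star)\|_2,\|\nabla^2 g(w^\star)\|_2]$, the eigenvalues of $D\mathcal F(w^\star)$ are the numbers $1-\eta\mu$ with $|\mu|\le\|\nabla^2 g(w^\star)\|_2$, which are strictly positive because $\eta<1/\|\nabla^2 g(w^\star)\|_2$. Hence $D\mathcal F(w^\star)\succ0$ is invertible, and the inverse function theorem yields an open $U\ni w^\star$ on which $\mathcal F$ is a diffeomorphism onto the open set $V:=\mathcal F(U)\ni\mathcal F(w^\star)$.

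Shrinking $U$ so that $h(w)\ge h(w^\star)$ for all $w\in U$, every $z\in V$ can be written $z=\mathcal F(w)$ with $w\in U$, giving $f(z)=h(w)\ge h(w^\star)=f(\mathcal F(w^\star))$; since $V\subseteq\mathcal W$ is open, $\mathcal F(w^\star)$ is a local minimizer of $f$, so $\varepsilon$-globality of $f$ gives $f(\mathcal F(w^\star))-\min_{w\in\mathcal W}f(w)\le\varepsilon$. Because $\min_{w\in\mathcal W}h(w)=\min_{w\in\mathcal W}f(\mathcal F(w))\ge\min_{w\in\mathcal W}f(w)$, we conclude
\[
h(w^\star)-\min_{w\in\mathcal W}h(w)\ \le\ f(\mathcal F(w^\star))-\min_{w\in\mathcal W}f(w)\ \le\ \varepsilon,
\]
which is the claim. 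Equivalently, this is Lemma~\ref{thm:loc_min_open} applied with $\ell=f$ and $\mathcal Z=\mathrm{range}(\mathcal F)$, the above being just the verification of its hypotheses in the present setting.

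No essentially new difficulty arises — the proof is that of Theorem~\ref{thm:global_maml} with $M$ replaced by $\|\nabla^2 g(w^\star)\|_2$ — but the one point worth care is that we no longer get $D\mathcal F\succ0$ (hence $\mathcal F$ open) \emph{everywhere}: we only control the Hessian at $w^\star$, so $\mathrm{range}(\mathcal F)$ need not be open and $\mathcal F$ may be neither injective nor open away from $w^\star$. This is harmless, since the inverse function theorem and the transfer-of-local-minimality step above use only that $\mathcal F$ is a local diffeomorphism at the single point $w^\star$.
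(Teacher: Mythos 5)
Your proposal is correct and follows essentially the same route as the paper: establish local openness of the adaptation map $\mathcal F(w)=w-\eta\nabla g(w)$ at $w^\star$ from positive definiteness of $I-\eta\nabla^2 g(w^\star)$, then transfer local minimality through $\mathcal F$ and invoke $\varepsilon$-globality of $f$ (i.e., Lemma~\ref{thm:loc_min_open}). The only cosmetic difference is that you apply the pointwise inverse function theorem directly at $w^\star$, whereas the paper first uses continuity of $\nabla^2 g$ to get a ball around $w^\star$ on which the Jacobian is positive definite and then cites the open-mapping form of the theorem (Lemma~\ref{thm:inverseFT}); both yield the required local openness.
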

\begin{proof}
The mapping $\mathcal F(w) = w-\eta \nabla g(w)$ is continuously differentiable over $\mathcal W.$ The Jacobian of this mapping is $\nabla \mathcal F(w) = \mathcal I-\eta \nabla^2 g(w).$ By assumption, $\eta<\frac{1}{\|\nabla^2 g(w^\star)\|_2}$ and consequently there exists $\delta$ such that $\eta<\frac{1}{\|\nabla^2 g(w)\|_2}$ for all $w\in \mathcal B_{\delta}(w^\star).$ Similarly to the proof of Theorem \ref{thm:global_maml}, $\nabla \mathcal F(w)$ is positive definite for all $ w\in \mathcal B_{\delta}(w^\star),$ and therefore by Lemma \ref{thm:inverseFT} in the appendix, $\mathcal F\big|_{\mathcal B_{\delta}(w^\star)}$ is an open mapping and hence locally open at $w^\star.$ By Lemma \ref{thm:loc_min_open}, it means that $h(\bar w) - \min_{w\in \mathcal W} h(w) \le \varepsilon.$
\end{proof}

So far, we have shown that the benign landscape properties of MAML applied to a singleton $\mathcal T$ are inherited from the benign landscape of the objective of the task to which MAML has been applied. In particular, this holds true for the LQR tasks. 

\subsection{Several identical tasks}
Results obtained for the single-task scenario give rise to a hypothesis that the benign landscape of every individual task would help with the convergence of MAML in a multi-task setting as well, provided that all of the tasks have a similar structure.
Thus, in this part we study multiple-task learning for which the MAML has originally been designed for. Starting with some tasks that are the most similar to each other, we consider LQR tasks that coincide up to multiplication of the cost by a positive scalar. This is the highest degree of similarity one can hope to obtain between sequential decision-making problems. More precisely, the landscape features of the cost function (local and global minimizers, maximizers and saddle points) are preserved under this transformation, and therefore we refer to tasks of these types as identical henceforth.

\begin{figure}
\centering

    \begin{subfigure}[b]{0.8\linewidth}
    \includegraphics[width=\linewidth]{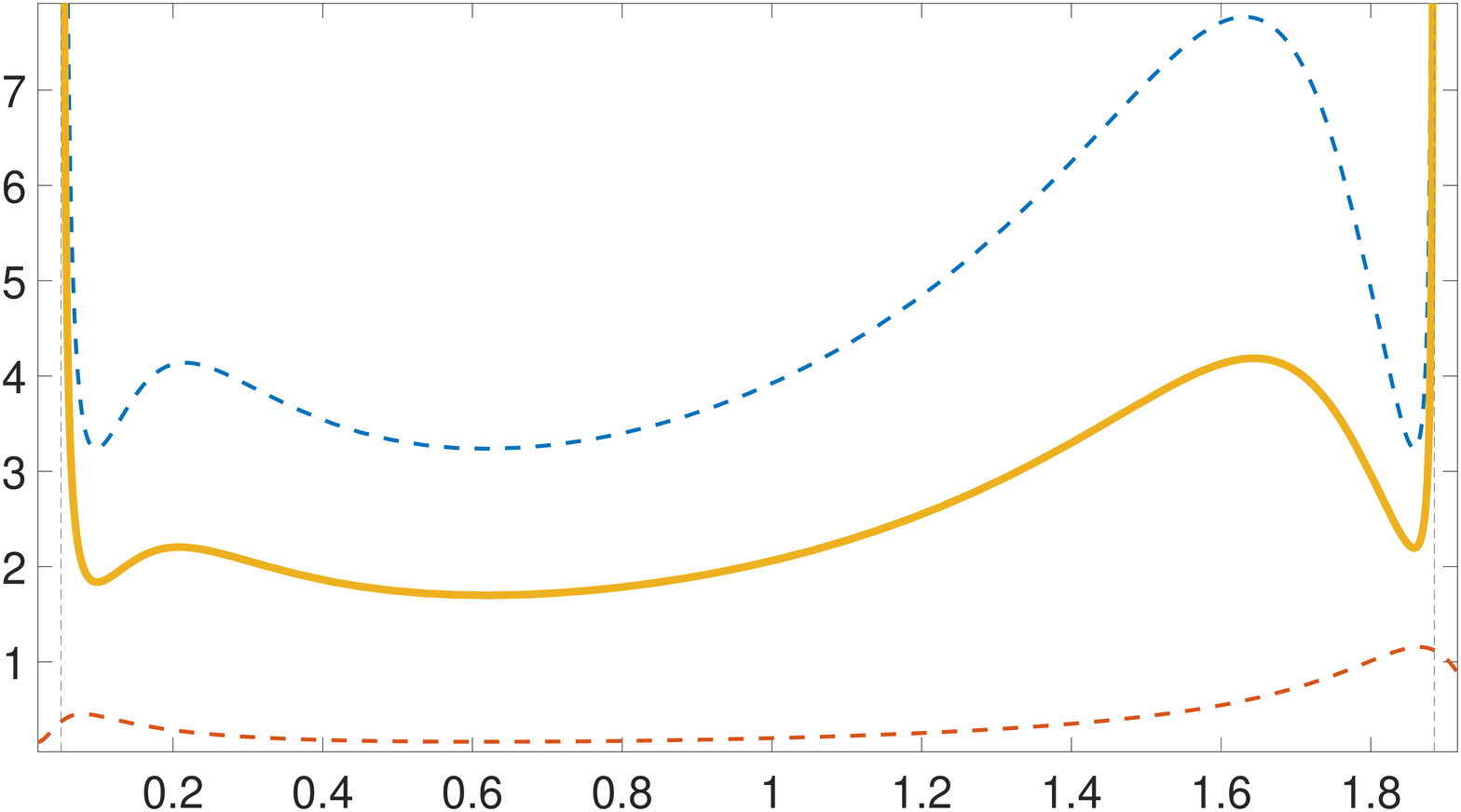}
    \caption{\label{img:two_LQR}}
  \end{subfigure}
  \begin{subfigure}[b]{0.8\linewidth}
    \includegraphics[width=\linewidth]{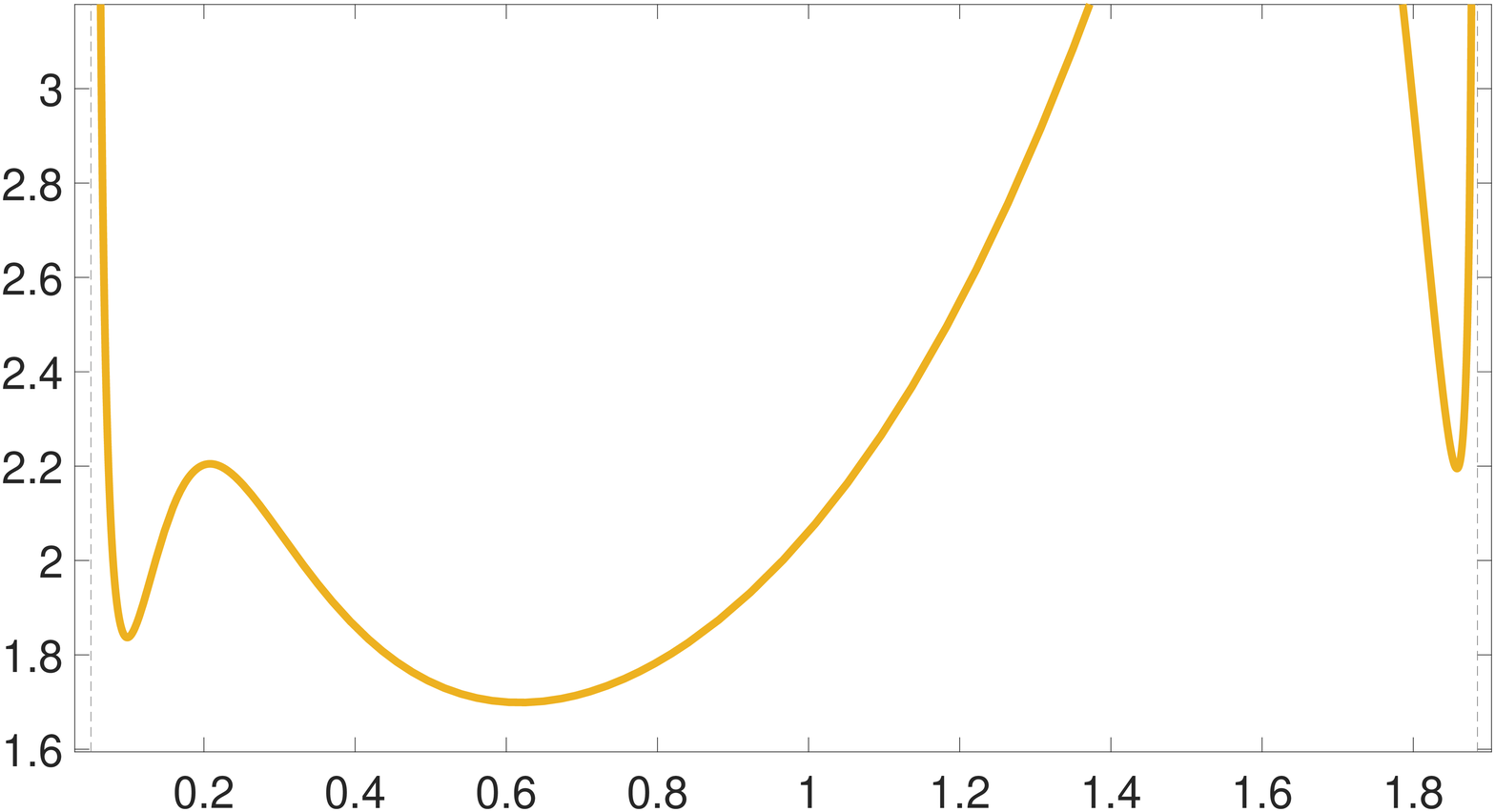}
    \caption{\label{img:two_LQR_zommed}}
  \end{subfigure}

\caption{\label{img:maml_two_tasks}
Two MAML objective functions \eqref{eqn:maml_obj} for identical LQR tasks (dashed lines) and the MAML objective for the uniform distribution among them (solid line). \ref{img:two_LQR_zommed} demonstrates spurious local minima of the solid line.
}
\end{figure}



We consider a finite set of LQR tasks with a uniform distribution among them, which allows us to reduce the MAML objective \eqref{eqn:maml_obj} to the form 
\[\frac{1}{|\mathcal T|}\sum_{\tau \in \mathcal T}C_{\tau}(W-\eta\nabla C_{\tau}(W))\]
Figure \ref{img:maml_two_tasks} demonstrates an example of the MAML objective applied to two identical LQR tasks. Although the MAML objective of each individual task is global, only one global minimizer coincides among all of them. This is the minimizer that corresponds to $W^*.$ The two minimizers on the sides are shifted and after interference they produce spurious local minima for the total objective function. Hence, we can conclude that MAML fails to capture this type of similarity between the tasks. One practical lesson to learn from this picture is that keeping the cost (or the reward) function of the considered tasks normalized may improve the quality of the solution provided by MAML. Another lesson is that the design of meta-learning algorithms may benefit from considering the analysis of identical tasks as the simplest form of common structure in the tasks.

As an alternative, we propose a modification of MAML with normalized adaptation step. It turns out that this modification manages to capture the similarity between tasks with scaled rewards. The objective function for this version of MAML under single-task scenario is $h':$

\[h'(w) = f\left(w-\eta\frac{\nabla g(w)}{\|\nabla g(w)\|}\right)\]
For which a statement similar to Proposition \ref{prop:epsiglob} holds.

\begin{proposition}\label{prop:norm_glob_single}
If $f$ is $\varepsilon$-global for some $\varepsilon>0$ and $w^{\star}\in \mathcal W$ is a local minimizer of $h(w)$ with $\eta<\frac{\|\nabla g(w^\star)\|}{\|\nabla^2g(w^\star)\|},$ then
\[h'(\bar w) - \min_{w\in \mathcal W} h'(w) \le \varepsilon\]
\end{proposition}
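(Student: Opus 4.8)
The plan is to mimic the proof of Proposition~\ref{prop:epsiglob}, replacing the affine adaptation map by the normalized one
\[
\mathcal F'(w) \;=\; w - \eta\,\frac{\nabla g(w)}{\|\nabla g(w)\|}\,,
\]
so that $h' = f\circ\mathcal F'$, and reducing the claim to local openness of $\mathcal F'$ at $w^\star$. Indeed, once $\mathcal F'$ is shown to be locally open at the local minimizer $w^\star$ of $h' = f\circ\mathcal F'$, Lemma~\ref{thm:loc_min_open} (applied with $\ell=f$ and $\mathcal F=\mathcal F'$) gives $h'(w^\star) - \min_{w\in\mathcal W} h'(w) = f(\mathcal F'(w^\star)) - \min_{w\in\mathcal W} f(\mathcal F'(w)) \le \varepsilon$, which is the statement.

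So the work is entirely in establishing openness of $\mathcal F'$ near $w^\star$. First I would observe that the hypothesis $\eta < \|\nabla g(w^\star)\|/\|\nabla^2 g(w^\star)\|$ is only meaningful when $\nabla g(w^\star)\neq 0$, hence $w\mapsto \nabla g(w)/\|\nabla g(w)\|$, and therefore $\mathcal F'$, is continuously differentiable on some ball $\mathcal B_{\delta}(w^\star)$ (here using that $g$ is twice continuously differentiable). Next I would compute the Jacobian,
\[
\nabla\mathcal F'(w) \;=\; \mathcal I - \frac{\eta}{\|\nabla g(w)\|}\left(\mathcal I - \frac{\nabla g(w)\,\nabla g(w)^\top}{\|\nabla g(w)\|^2}\right)\nabla^2 g(w)\,,
\]
and note that the bracketed factor is an orthogonal projection (onto the subspace orthogonal to $\nabla g(w)$), so it has operator norm at most $1$; consequently $\|\nabla\mathcal F'(w) - \mathcal I\| \le \eta\,\|\nabla^2 g(w)\|/\|\nabla g(w)\|$. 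By continuity of $w\mapsto\|\nabla g(w)\|$ and $w\mapsto\|\nabla^2 g(w)\|$ and the strict inequality at $w^\star$, after shrinking $\delta$ we get $\|\nabla\mathcal F'(w) - \mathcal I\| < 1$ for all $w\in\mathcal B_{\delta}(w^\star)$, so $\nabla\mathcal F'(w)$ is invertible there (its symmetric part is in fact positive definite). This is exactly the hypothesis required to invoke Lemma~\ref{thm:inverseFT}, which gives that $\mathcal F'\big|_{\mathcal B_{\delta}(w^\star)}$ is an open mapping, hence locally open at $w^\star$, and the argument closes as above.

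The only genuine departure from Proposition~\ref{prop:epsiglob} — and the step I expect to be the main obstacle — is that here the Jacobian $\nabla\mathcal F'(w)$ is a product of a projection with the Hessian and is therefore \emph{not} symmetric, so one cannot conclude openness from positive-definiteness of the Jacobian as in the affine adaptation step; the orthogonal-projection norm estimate is precisely what recovers invertibility (and a positive-definite symmetric part, should that be the form in which Lemma~\ref{thm:inverseFT} is stated) in this non-symmetric setting. The remaining loose end is the degenerate case $\nabla^2 g\equiv 0$ near $w^\star$, where $\nabla\mathcal F' = \mathcal I$ identically and openness is immediate; it is dispatched separately.
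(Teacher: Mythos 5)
Your proposal is correct and takes essentially the same route as the paper: compute the Jacobian of the normalized adaptation map, bound its deviation from the identity by $\eta\|\nabla^2 g\|/\|\nabla g\|$ (you via the orthogonal-projection norm, the paper via the rank-one eigenvalue computation for $\|\nabla g\|^2\mathcal I-\nabla g\nabla g^\top$, which is the same estimate), get nonsingularity on a neighborhood of $w^\star$ by continuity, and then invoke Lemma~\ref{thm:inverseFT} followed by Lemma~\ref{thm:loc_min_open}. The non-symmetry point you flag as the main obstacle is in fact handled the same way in the paper, since $v^\top(\mathcal I-E)v\ge(1-\|E\|)\|v\|^2>0$ holds for non-symmetric $E$ as well, so the norm bound already yields the positive definiteness (in the quadratic-form sense) and hence the invertibility needed for the openness lemma.
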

\begin{proof}
The mapping $\mathcal F(w) = w-\eta \frac{\nabla g(w)}{\|\nabla g(w)\|}$ is continuously differentiable over $\mathcal W.$ The Jacobian of this mapping is 
\[\nabla \mathcal F(w) = \mathcal I-\eta\left[\frac{\nabla^2g}{\|\nabla g\|^3}\left(\|\nabla g\|^2I-\nabla g\nabla g^T\right)\right]\]
Observe that 
\begin{align*}
    \left\|\frac{\nabla^2g}{\|\nabla g\|^3}\left(\|\nabla g\|^2\mathcal I-\nabla g\nabla g^T\right)\right\|\le\\ \frac{\|\nabla^2g\|}{\|\nabla g\|^3}\left\|\|\nabla g\|^2 \mathcal I-\nabla g\nabla g^T\right\|
\end{align*}
Since $\nabla g\nabla g^T$ is a rank-one matrix, the largest eigenvalue of $\|\nabla g\|^2I-\nabla g\nabla g^T$ must be equal to $\|\nabla g\|^2$ and therefore
\[\left\|\|\nabla g\|^2\mathcal I-\nabla g\nabla g^T\right\| = \|\nabla g\|^2\] 
Hence, $\eta<\frac{\|\nabla g(w^\star)\|}{\|\nabla^2g(w^\star)\|}$ is a sufficient to conclude that there exists $\delta$ such that $\nabla \mathcal F(w)$ is positive definite for all $ w\in \mathcal B_{\delta}(w^\star),$ and therefore by Lemma \ref{thm:inverseFT}, $\mathcal F\big|_{\mathcal B_{\delta}(w^\star)}$ is an open mapping and thus locally open at $w^\star.$ By Lemma \ref{thm:loc_min_open}, it follows that $h'( w^\star) - \min_{w\in \mathcal W} h'(w) \le \varepsilon.$
\end{proof}

In general, the objective of this modification of MAML can be written as \[\mathbb E_\tau f\left(w-\eta\frac{\nabla g_\tau(w)}{\|\nabla g_\tau(w)\|}\right)\] and its first-order stationary point can be found by utilizing the gradient descent with Armijo rule as noticed by \citet{bertsekas1998nonlinear} in Proposition 1.2.1.
The claim that MAML with normalized gradient step inherits the benign landscape from the identical tasks can be formulated for LQR tasks as follows:

\begin{theorem}
Let the normalized-gradient version of MAML be applied to $k$ LQR tasks $\{(A_i, B_i, Q_i, R_i)\}_{i=1}^k$ with scaled dynamics and rewards, meaning that there exist $\alpha_1, \ldots, \alpha_k>0$ and $\beta_1, \ldots, \beta_k>0$ such that 
\[A_1=\cdots=A_k;\] 
\[B_1=\cdots=B_k;\]
\[\alpha_1Q_1=\cdots=\alpha_kQ_k;\] 
\[\alpha_1R_1=\cdots=\alpha_kR_k,\]
and $h(W)=\sum_{i=1}^k\omega_iC_i\left(W-\eta \frac{\nabla C_i(W)}{\|\nabla C_i(W)\|}\right).$ Let $w^\star\in \mathbb R^{r\times d}$ be a local minimal point of $h(W)$ with $\eta<\frac{\|\nabla C_i(w^\star)\|}{\|\nabla^2C_i(w^\star)\|}$ for some $i\in \{1\ldots k\}.$ Then, $w^\star$ is the global minimizer of $h.$
\end{theorem}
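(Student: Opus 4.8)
The plan is to exploit the fact that normalizing the adaptation gradient cancels the per-task reward scaling, so that the $k$-task objective $h$ collapses to a positive multiple of the \emph{single}-task normalized-gradient MAML objective of one representative task, after which Proposition~\ref{prop:norm_glob_single} together with the known benign landscape of the LQR cost finishes the argument.

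First I would record that the scaling hypotheses give $Q_i = \rho_i Q_1$ and $R_i = \rho_i R_1$ with $\rho_i := \alpha_1/\alpha_i > 0$, while $A$ and $B$ are common to all tasks (the scalars $\beta_i$ do not enter the displayed conditions, so they are immaterial here). Since the dynamics are shared and the discrete Lyapunov equation $P = Q_i + W^\top R_i W + (A-BW)^\top P (A-BW)$ has a unique solution for every stabilizing $W$, linearity in $(Q,R)$ forces $P_W^{(i)} = \rho_i P_W^{(1)}$, hence $C_i(W) = \rho_i\, C_1(W)$ on the common domain $\mathcal W = \{W : A-BW \text{ stable}\}$, which is open. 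Differentiating, $\nabla C_i = \rho_i \nabla C_1$ and $\nabla^2 C_i = \rho_i \nabla^2 C_1$, so the normalized gradients coincide: $\nabla C_i(W)/\|\nabla C_i(W)\| = \nabla C_1(W)/\|\nabla C_1(W)\| =: u(W)$ wherever the gradient is nonzero. (Because $\eta>0$, the stated inequality forces $\nabla C_i(w^\star)\neq 0$, and by continuity $\nabla C_1 \neq 0$ on a neighborhood of $w^\star$, so $u$ is well-defined and $C^1$ there.)

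Next I would observe that every summand of $h$ therefore uses the \emph{same} adaptation map $\mathcal F(W) = W - \eta\, u(W)$, so $h(W) = \big(\sum_{i=1}^k \omega_i \rho_i\big)\, C_1(\mathcal F(W)) =: c\, h_1'(W)$ with $c>0$, where $h_1'(W) := C_1\big(W - \eta\, \nabla C_1(W)/\|\nabla C_1(W)\|\big)$ is exactly the single-task normalized-gradient objective for task $1$ (with $f=g=C_1$). Since $c>0$, $w^\star$ is a local minimizer of $h$ iff it is one of $h_1'$; and since $\rho_i$ cancels in the ratio, $\|\nabla C_i(w^\star)\|/\|\nabla^2 C_i(w^\star)\| = \|\nabla C_1(w^\star)\|/\|\nabla^2 C_1(w^\star)\|$, so the hypothesis on $\eta$ for some $i$ is equivalent to the same bound for task $1$. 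The LQR cost $C_1$ is a global function (\citet{fazel2018global}, i.e.\ the $\varepsilon=0$ instance underlying Theorem~\ref{thm:global_maml_LQR}); so applying Proposition~\ref{prop:norm_glob_single} with $f=g=C_1$ and $\varepsilon=0$ gives $h_1'(w^\star) = \min_W h_1'(W)$, hence $w^\star$ is a global minimizer of $h = c\, h_1'$.

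The hard part is really just the first step: I would need to be careful that all tasks share the same stabilizing set (immediate from common $A,B$), that the value-matrix identity $P_W^{(i)} = \rho_i P_W^{(1)}$ propagates to the costs themselves — which uses a common initial-state distribution, implicit in the setup — and that the places where $\mathcal F$ pushes $W$ out of the stabilizing region are handled exactly as in the proof of Proposition~\ref{prop:norm_glob_single} (where $C_1(\mathcal F(W))=+\infty$ cannot occur at a finite-valued local minimizer, and $\mathcal F$ restricted to a small ball around $w^\star$ is shown to be locally open via the inverse function theorem, Lemma~\ref{thm:inverseFT}). Everything else is routine bookkeeping.
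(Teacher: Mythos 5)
Your proposal is correct and follows essentially the same route as the paper's own proof: factor out the reward scaling so that all $C_i$ are positive multiples of one common cost, observe that the normalized adaptation map is then identical across tasks so $h$ collapses to a positive multiple of the single-task normalized-gradient objective, and invoke Proposition~\ref{prop:norm_glob_single} (with the globality of the LQR cost) to conclude. Your version simply spells out details the paper leaves implicit, such as the linearity of the Lyapunov solution in $(Q,R)$ and the scale-invariance of the step-size condition.
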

\begin{proof}
From the construction of the function $C_i,$ it follows that for all $W\in \mathbb R^{r\times d}$ there exists $C(W)$ such that \[C(W)=\frac{C_1(W)}{\alpha_1}=\cdots=\frac{C_k(W)}{\alpha_k}\]
Consequently, $\nabla C(W) = \frac{\nabla C_1(W)}{\alpha_1}=\cdots=\frac{\nabla C_k(W)}{\alpha_k}$ and $\nabla^2 C(W) = \frac{\nabla^2 C_1(W)}{\alpha_1}=\cdots=\frac{\nabla^2 C_k(W)}{\alpha_k}$ for all $i\in \{1\ldots k\}.$
Hence, \[h(W) = \left[\sum_{i=1}^kw_i\alpha_i\right]C\left(W-\eta \frac{ \nabla C(W)}{ \|\nabla C(W)\|}\right)\]
Since $w^\star$ is a local minimum, by Proposition \ref{prop:norm_glob_single}, we conclude that $w^\star$ is also a global minimum.
\end{proof}

\subsection{Several similar tasks}

Moving forward, it is desirable to consider a different type of similarity between different tasks in the MAML setting. Therefore, we study the landscape of MAML on those tasks that are similar to each other in terms of the norm of the difference between parameters. For LQR, the parameters are the matrices $A, B, Q$ and $R.$ Our result states that the benign landscape of the underlying tasks carries over to the MAML objective if the tasks are sufficiently close to each other. For LQR, it can be put in the form of the following informal proposition:
\begin{proposition}\label{thm:close_LQR}
For every $\varepsilon>0$ and $k\in \mathbb N,$ there exists $\delta>0$ such that the MAML objective \eqref{eqn:maml_obj} is $\varepsilon$-global for almost any set $\mathcal T$ of $k$ LQR tasks defined through the parameters $\{(A_i, B_i, Q_i, R_i)\}_{i=1}^k$ such that for all $i,j\in \{1\ldots k\}$
\[\|A_i-A_j\|+\|B_i-B_j\|+\|Q_i-Q_j\|+\|R_i-R_j\|\le \delta\quad \]
\end{proposition}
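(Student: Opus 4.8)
The plan is to realize the combined MAML objective as a small $C^2$-perturbation of the single-task MAML objective of one of the tasks, to recall from Theorem \ref{thm:global_maml_LQR} that this single-task objective is global, and then to invoke a stability-of-landscape argument: a sufficiently small $C^2$-perturbation of a coercive, global, and \emph{Morse} function is $\varepsilon$-global. The genericity qualifier ``almost any'' will be used at exactly one point, namely to guarantee that the reference single-task objective is Morse (all critical points nondegenerate). First I would set up the reduction. The pairwise bound forces every task to be within $\delta$ of $\tau_1$ in the parameter metric. On the open set of stabilizing controllers the cost $C_\tau(W)$ is jointly real-analytic in $(W,A,B,Q,R)$ (through the Lyapunov equation defining $P_W$), hence $\nabla C_\tau$, $\nabla^2 C_\tau$, and therefore the single-task MAML objective $h_\tau(W)=C_\tau(W-\eta\nabla C_\tau(W))$ together with its first two derivatives, depend continuously on the parameters, uniformly on compact subsets of a common stabilizing region (which exists for $\delta$ small, again by continuity). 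Averaging, $h=\sum_i \omega_i h_{\tau_i}$ satisfies $\|h-h_{\tau_1}\|_{C^2(K)}\le \rho(\delta)$ with $\rho(\delta)\to 0$ as $\delta\to 0$, on any fixed compact $K$.

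\textbf{Structure of the reference objective.} By Theorem \ref{thm:global_maml_LQR}, $h_{\tau_1}$ is global; using coercivity of the LQR cost (\citet{fazel2018global}) one checks that $h_{\tau_1}$ has compact sublevel sets, so its global minimum value $m^\star_1$ is attained. For almost every parameter tuple $\tau_1$, the function $h_{\tau_1}$ is additionally Morse: this is a parametric-transversality (Sard) argument applied to $\nabla_W h$ viewed as a map of $(W;\text{parameters})$. Fixing such a $\tau_1$, set $K=\{h_{\tau_1}\le m^\star_1+\varepsilon+1\}$. Then there exist a radius $\sigma>0$ and constants $c_\sigma,\mu_0>0$ such that on a neighborhood of $K$ one has $\|\nabla h_{\tau_1}\|\ge c_\sigma$ outside the $\sigma$-balls around the finitely many critical points, and at each critical point $p$ either $\nabla^2 h_{\tau_1}(p)\succeq \mu_0 I$ (so $p$ is a strict local minimizer, hence, $h_{\tau_1}$ being global, a global minimizer with $h_{\tau_1}(p)=m^\star_1$) or $\lambda_{\min}(\nabla^2 h_{\tau_1}(p))\le -\mu_0$. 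Shrinking $\sigma$ if needed, we may also assume $L\sigma<\mu_0/2$, where $L$ bounds the Lipschitz constant of $\nabla^2 h_{\tau_1}$ on $K$, and that $\sigma$ is small enough that $h_{\tau_1}\le m^\star_1+\varepsilon/3$ on every $\sigma$-ball around a global minimizer.

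\textbf{Transfer of the landscape.} Let $w^\star$ be any local minimizer of $h$. For $\delta$ small, the uniform compactness of sublevel sets (inherited by all $h_{\tau_i}$, hence by $h$) forces $w^\star\in K$ and $|\min h-m^\star_1|\le\rho(\delta)$. Since $\nabla h(w^\star)=0$ and $\|\nabla h-\nabla h_{\tau_1}\|_{C^0(K)}\le\rho(\delta)<c_\sigma$, the point $w^\star$ lies in the $\sigma$-ball around some critical point $p$ of $h_{\tau_1}$. If $\lambda_{\min}(\nabla^2 h_{\tau_1}(p))\le -\mu_0$, then $\lambda_{\min}(\nabla^2 h_{\tau_1}(w^\star))<-\mu_0/2$ by the Lipschitz bound, and $\lambda_{\min}(\nabla^2 h(w^\star))<-\mu_0/2+\rho(\delta)<0$ for $\delta$ small, contradicting that $w^\star$ is a local minimizer. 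Hence $p$ is a global minimizer of $h_{\tau_1}$, so $h_{\tau_1}(w^\star)\le m^\star_1+\varepsilon/3$ and $h(w^\star)\le h_{\tau_1}(w^\star)+\rho(\delta)\le \min h+\varepsilon/3+2\rho(\delta)\le\min h+\varepsilon$ for $\delta$ small enough. This shows $h$ is $\varepsilon$-global whenever $\tau_1$ is Morse and all tasks lie within $\delta=\delta(\varepsilon,k,\tau_1)$ of it — a full-measure set of configurations. (One can alternatively phrase the ``near a strict local minimum'' step through the local-openness machinery of Lemma \ref{thm:loc_min_open}, as in the single-task proofs, but the Hessian argument is the direct route here.)

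\textbf{Main obstacle.} The crux, and the place where the word ``almost'' is genuinely needed, is the Morse requirement on the reference objective: a one-dimensional example (a coercive global function with a degenerate, positive-semidefinite but not local-minimal critical point, perturbed by an arbitrarily small linear term) creates spurious local minima whose suboptimality does \emph{not} vanish with the perturbation, so some genericity is unavoidable. Making the transversality argument that $h_\tau$ is Morse for a.e.\ parameter rigorous — and checking that the nondegenerate case indeed produces, for every nearby configuration, a local minimizer of near-optimal value — is the heart of the proof. A secondary technical point is uniform control near the boundary of the stabilizing region (where both $C_\tau$ and the adaptation map $W\mapsto W-\eta\nabla C_\tau(W)$ blow up), needed for the compact-sublevel-set property of $h$ and for the constants $c_\sigma,\mu_0,L$; and obtaining a $\delta$ that depends only on $\varepsilon$ and $k$ (uniformly over a.e.\ configuration), rather than on the base configuration, requires restricting the parameters to a fixed compact set, which is implicit in the informal statement.
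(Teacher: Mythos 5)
Your proposal is correct in outline and rests on the same core strategy as the paper: stability of a benign landscape under small perturbations of the tasks, a generic nondegeneracy condition to make that stability quantitative, and a restriction to a compact parameter set to get a $\delta$ depending only on $\varepsilon$ and $k$. The packaging, however, is genuinely different. The paper proves a standalone parametric result (Theorem \ref{thm:global_perturb}) that tracks each isolated stationary point under parameter changes via the implicit function theorem, classifies it by the sign of the Hessian, and excludes new stationary points by a compactness/lower-bound argument; it then handles the average over tasks not by a $C^2$-perturbation estimate but by the trick of Lemma \ref{thm:global_convcomb}, viewing $\lambda\ell(\cdot,\bar t)+(1-\lambda)\ell(\cdot,t)$ as a new parametric family that coincides with the reference function at $t=\bar t$, and finally makes $\delta$ uniform over a compact parameter set by the contradiction/compactness argument of Proposition \ref{thm:uniform}. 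You instead fold all of this into one direct quantitative Morse-stability argument around a reference task (constants $c_\sigma,\mu_0,L$, gradient lower bound off the critical balls, eigenvalue dichotomy at critical points), which is more elementary and makes the mechanism explicit; it also correctly places the nondegeneracy requirement on the MAML objective itself, whereas the paper's formal statement (Proposition \ref{thm:close_LQR_re}) phrases it through Assumption \ref{asm:contin} and the singular-Hessian condition on the LQR cost. What the paper's route buys is reusable intermediate results and an explicit treatment of the uniform-$\delta$ step that you only gesture at; what your route buys is directness and quantitative control. On the two points you flag as the ``heart'' — genericity of the Morse property and control near the boundary of the stabilizing region — your Sard/transversality sketch is at essentially the same level of informality as the paper's own dimension-counting argument for the set $HS(W)$ and its compact-domain Assumption \ref{asm:contin}, so I would not count these as gaps relative to the paper's standard, though a fully rigorous treatment would indeed have to supply them.
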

The formal statement along with the proof of the result and additional discussions are provided in the appendix. Intuitively, as long as the dependence of the cost of a single task on the parameters is continuous, for a set of tasks that are close to each other, the multi-task landscape of MAML remains close to the landscape of MAML for just one of them. However, for this reasoning to hold true, we would need to determine the continuity properties of a manifold of parametric $\varepsilon$-global functions. In order to do that, we make an assumption below.

\begin{assumption}\label{asm:contin}
Let $\ell:\mathcal X\times \mathbb R^{m}\to \mathbb R$ with a compact set $\mathcal X\subset \mathbb R^n$ be a twice continuously differentiable function with respect to $x\in \mathcal X$ with $\ell, \nabla_x\ell$ and $\nabla^2_{xx}\ell$ being continuous with respect to $t\in \mathbb R^m$. Assume that $\ell(\cdot,t)$ has a finite number of first-order stationary points for all $t\in \mathbb R^m$ and that the Hessian is non-singular ($\text{det}[\nabla_{xx}^2\ell(x,t)]\ne 0$) for all $t\in \mathbb R^m$ and all $x\in \mathcal{X}$ such that $\nabla_t\ell(x,t)=0.$
\end{assumption}
The following Theorem determines the continuity property of $\varepsilon$-globality over the manifold of parametric functions satisfying Assumption \ref{asm:contin}. We prove it here as an important standalone result.
\begin{theorem}\label{thm:global_perturb}
If for some $\bar t$ and $\varepsilon>0$ the function $\ell(\cdot, \bar t)$ satisfying Assumption \ref{asm:contin} is $\varepsilon$-global, then for any $\varepsilon'>0$ such that $\varepsilon'>\varepsilon$ there exists $\delta>0$ for which the function $\ell(\cdot, t)$ is $\varepsilon'$-global for all $t\in \mathcal B_{\delta}(\bar t)$.
\end{theorem}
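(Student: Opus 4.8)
The plan is to follow the deformation of the first-order stationary points of $\ell(\cdot,t)$ as $t$ moves away from $\bar t$, argue that both their number and their classification into local minima versus non-minima stay frozen on a small ball around $\bar t$, and then use plain continuity of the objective values at these points together with continuity of the global minimum value to transfer the $\varepsilon$-bound into an $\varepsilon'$-bound.

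First I would pin down the stationary points. Let $x_1,\dots,x_N$ be the finitely many (Assumption~\ref{asm:contin}) first-order stationary points of $\ell(\cdot,\bar t)$. Since each Hessian $\nabla^2_{xx}\ell(x_j,\bar t)$ is non-singular, the implicit function theorem applied to $\nabla_x\ell(x,t)=0$ yields, for each $j$, a product neighborhood $U_j\times V_j$ of $(x_j,\bar t)$ and a continuous branch $t\mapsto x_j(t)\in U_j$ with $x_j(\bar t)=x_j$, $\nabla_x\ell(x_j(t),t)=0$, and $x_j(t)$ the unique zero of $\nabla_x\ell(\cdot,t)$ inside $U_j$; shrinking $V_j$, the Hessian along the branch stays non-singular. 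The leftover set $K:=\mathcal X\setminus\bigcup_j U_j$ is compact and stationary-point-free for $\ell(\cdot,\bar t)$, so $\|\nabla_x\ell(\cdot,\bar t)\|\ge c>0$ on $K$; uniform continuity of $\nabla_x\ell$ on $\mathcal X$ times a compact ball around $\bar t$ then gives $\delta_0>0$ with $\nabla_x\ell(x,t)\ne0$ for all $x\in K,\ t\in\mathcal B_{\delta_0}(\bar t)$. Hence on a small ball $\mathcal B_{\delta_1}(\bar t)\subseteq\mathcal B_{\delta_0}(\bar t)\cap\bigcap_jV_j$ the stationary points of $\ell(\cdot,t)$ are exactly $x_1(t),\dots,x_N(t)$, and no new ones appear.

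Next I would classify. A non-degenerate stationary point of a $C^2$ function is a local minimizer precisely when its Hessian is positive definite. Along each branch the symmetric matrix $\nabla^2_{xx}\ell(x_j(t),t)$ depends continuously on $t$ and stays non-singular, so its eigenvalues vary continuously without crossing zero; the count of negative eigenvalues, and in particular positive definiteness, is therefore constant. Thus there is $\delta_2>0$ and an index set $J$ such that for all $t\in\mathcal B_{\delta_2}(\bar t)$ the local minimizers of $\ell(\cdot,t)$ are exactly $\{x_j(t):j\in J\}$, with $J$ indexing the local minimizers of $\ell(\cdot,\bar t)$. Then for the values: $\ell(x_j(t),t)$ is continuous in $t$, and $\mu(t):=\min_{x\in\mathcal X}\ell(x,t)$ is continuous as well (minimum of a jointly continuous function over a compact set). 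Hence each gap $\gamma_j(t):=\ell(x_j(t),t)-\mu(t)$ is continuous, and $\gamma_j(\bar t)\le\varepsilon$ for $j\in J$ because $\ell(\cdot,\bar t)$ is $\varepsilon$-global. Fixing $\varepsilon'>\varepsilon$, continuity gives $\delta_3>0$ with $\gamma_j(t)\le\varepsilon'$ on $\mathcal B_{\delta_3}(\bar t)$ for every $j\in J$ --- and it is precisely the strictness $\varepsilon'>\varepsilon$ that lets us absorb the perturbation of a gap that might equal $\varepsilon$ exactly at $\bar t$. Taking $\delta=\min\{\delta_1,\delta_2,\delta_3\}$, every local minimizer of $\ell(\cdot,t)$ with $t\in\mathcal B_\delta(\bar t)$ is some $x_j(t)$, $j\in J$, whose value lies within $\varepsilon'$ of $\mu(t)$, so $\ell(\cdot,t)$ is $\varepsilon'$-global.

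The hard part will be the non-creation argument in the first step: guaranteeing that perturbing $t$ neither splits a stationary point, nor lets one wander in from far away, nor collapses one into a degenerate continuum. This is exactly where compactness of $\mathcal X$ and the uniform non-degeneracy hypothesis of Assumption~\ref{asm:contin} are both indispensable. A secondary subtlety is local minimizers sitting on the boundary of $\mathcal X$, which need not be stationary points; in the intended applications (e.g.\ the open set of stabilizing LQR controllers) every local minimizer is first-order stationary so this does not arise, but in full generality one would have to supplement the argument with a separate continuity analysis of boundary minimizers, presumably via a compactness argument analogous to the one above.
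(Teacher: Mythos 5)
Your proof follows essentially the same route as the paper's: track the finitely many non-degenerate stationary points via the implicit function theorem, show their type (minimum/maximum/saddle) is preserved by continuity of the Hessian, rule out new stationary points through a compactness and gradient-lower-bound argument on the complement of the tracked neighborhoods, and transfer the $\varepsilon$-gap to $\varepsilon'$ by continuity of values. The only cosmetic difference is the final step --- you invoke continuity of the global-minimum value $\mu(t)$ directly, whereas the paper follows the global minimizer along its own branch and closes with a three-term triangle inequality --- and your caveat about local minimizers on the boundary of the compact set $\mathcal X$ points to an issue the paper's proof silently shares.
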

\begin{proof}[Proof of Theorem \ref{thm:global_perturb}]
We prove the theorem in three steps. Step 1: $\ell(\cdot, t)$ has stationary points in the neighoborhoods of the stationary points of $\ell(\cdot, \bar t).$ Step 2: the types of the stationary points coincide. Step 3: There are no stationary points outside of the considered neighborhoods.

A finite number of stationary points means that all of them are isolated. Consider a stationary point $\bar x$ of $\ell(\cdot, \bar t).$ By Lemma \ref{thm:gift}, there exist $\psi>0$ and $\phi>0$ such that for all $t\in\mathcal B_\psi(\bar t)$ there exists a unique $x(t)\in \mathcal B_{\phi}(\bar x)$ with the property that $\nabla_x \ell(x(t), t) = 0.$ This implies that for every function $\ell(\cdot, t)$ with $t\in\mathcal B_\psi(\bar t)$ there is a unique stationary point over $\mathcal B_{\phi}(\bar x).$ 

Note that for any $v\in \mathbb R^n$ the function $h(x, t, v)=v^\top[\nabla_{xx}^2\ell(x, t)]v$ is continuous in both $x$ and $t.$ By the assumption of the theorem, $\text{det}[\nabla_{xx}^2\ell(\bar x,\bar t)]\ne 0,$ and therefore $\nabla_{xx}^2\ell(\bar x,\bar t)\succ 0$ if $\bar x$ is a local minimum, $\nabla_{xx}^2\ell(\bar x,\bar t)\prec 0$ if it is a local maximum, and $\nabla_{xx}^2\ell(\bar x,\bar t)$ indefinite if it is a saddle point. In each of these cases, we describe how to find values $\delta'$ and $\phi'$ such that $\ell$ has a bounded value of local minima over $\mathcal B_{\phi'}(\bar x)\times\mathcal B_{\delta'}(\bar t).$

{\it \underline{Case 1:}} $\bar x$ is a local minimum. The value of $h(\bar x, \bar t, v)$ is positive for all $v\in \mathbb R^n\backslash \{0\}.$ By continuity, there exist $\psi'>0$ and $\phi'>0$ such that $\psi'<\psi$ and $\phi'<\phi$ and $h(x, t, v)>0$ for all $x\in \mathcal B_{\phi'}(\bar x),$ $t\in \mathcal B_{\psi'}(\bar t)$ and $v\in \mathbb R^n\backslash \{0\}.$ This way, $\nabla_{xx}^2\ell(x(t),t)\succ 0$ and therefore $x(t)$ is a local minimum of $\ell(\cdot, t).$ By continuity of $\ell(x, t)$ with respect to $x$ and $t,$ there exists $\delta'>0$ such that $\delta'<\psi'$ and $|\ell(x(t), t)-\ell(\bar x, \bar t)|< \frac{\varepsilon'-\varepsilon}{2}$ for all $t\in \mathcal B_{\delta'}(\bar t).$

{\it \underline{Case 2:}} $\bar x$ is a local maximum. The value of $h(\bar x, \bar t, v)$ is negative for all $v\in \mathbb R^n\backslash \{0\}.$ By continuity, there exist $\psi'>0$ and $\phi'>0$ such that $\psi'<\psi$ and $\phi'<\phi$ and $h(x, t, v)<0$ for all $x\in \mathcal B_{\phi'}(\bar x);$ $t\in \mathcal B_{\psi'}(\bar t)$ and $v\in \mathbb R^n\backslash \{0\}.$ This way, $\nabla_{xx}^2\ell(x(t),t)\prec 0$ and therefore $x(t)$ is a local maximum of $\ell(\cdot, t).$ In this case, we assign to the point $\bar x$ the value $\delta' = \psi'.$

{\it \underline{Case 3:}} $\bar x$ is a saddle point. There exist $v\in \mathbb R^n$ and $u\in \mathbb R^n$ such that $h(\bar x, \bar t, v)>0$ and $h(\bar x, \bar t, u)<0.$ By continuity, there exist $\psi'>0$ and $\phi'>0$ such that $\psi'<\psi$ and $\phi'<\phi$ yet $h(x, t, v)>0$ and $h(x, t, u)<0$ for all $x\in \mathcal B_{\phi'}(\bar x),$ $t\in \mathcal B_{\psi'}(\bar t)$ and $v\in \mathbb R^n\backslash \{0\}.$ This way, $\nabla_{xx}^2\ell(x(t),t)$ is indefinite and therefore $x(t)$ is a saddle point of $\ell(\cdot, t).$ In this case, we assign to the point $\bar x$ the value $\delta' = \psi'.$

As a result, having selected a single stationary point $\bar x$ of $\ell(\cdot, \bar t),$ we can find $\delta'$ and $\phi'$ such that all the stationary points of $\ell(\cdot, t)$ for $t\in \mathcal B_{\delta'}(\bar t)$ are of the same type as $\bar x,$ and in case they are local minimizers, they have a value that is not too different from $\ell(\bar x, \bar t).$ One can repeat this argument for all the stationary points and therefore form sets of numbers $\{\delta'_{i}\}_{i=1}^N$ and $\{\phi'_{i}\}_{i=1}^N,$ where $i$ corresponds to the index of each of the $N$ stationary points $\bar x_i$ of $\ell(\cdot, \bar t).$ 

Consider the set $\mathcal Y = \mathcal X\backslash\cup_{i\in \{1\ldots N\}}\mathcal B_{\phi'_i}(\bar x_i).$ It is a compact set as a compact set minus an open set and $\|\nabla_x \ell(x,\bar t)\|>0$ for all $x\in \mathcal Y.$ Since $\|\nabla_x \ell(x,\bar t)\|$ is continuous in $x$ over a compact set, it is uniformly continuous and it reaches its lower bound, meaning that there exists $\xi>0$ such that $\|\nabla_x \ell(x,\bar t)\|\ge \xi$ for all $x\in \mathcal Y.$ By continuity of $\|\nabla_x \ell(x,t)\|$ with respect to $t,$ there exists $\delta''$ such that $\|\nabla_x \ell(x,t)\|>\frac{\xi}{2}$ over $\mathcal Y\times \mathcal B_{\delta''}(\bar t)$ and therefore there are no stationary points of $\ell$ over $\mathcal Y\times \mathcal B_{\delta''}(\bar t).$

We select $\delta = \min[\{\delta'_i|i\in \{1\ldots N\}\}\cup \{\delta''\}]$ and observe that for all $t\in \mathcal B_{\delta}(\bar t)$ the only local minimizers of $\ell(\cdot, t)$ are those close to the local minimizers of $\ell(\cdot, \bar t).$ In {\it Case 1}, the corresponding $\delta'$ was selected such that, given a local minimizer $x(t)$ of $\ell(\cdot, t)$ that is neighboring a local minimizer $\bar x$ of $\ell(\cdot, \bar t)$ and a global minimizer $x'(t)$ of $\ell(\cdot, t)$ that is neighboring a local minimizer $\bar x'$ of $\ell(\cdot, \bar t),$ it holds that 
\begin{align*}
\ell(x(t), t)-\min_x \ell(x, t)= \ell(x(t), t)- \ell(x'(t), t)&=\\
\ell(x(t), t)-\ell(\bar x, \bar t)+\ell(\bar x, \bar t)-\ell(\bar x', \bar t)+\\+\ell(\bar x', \bar t)-\ell(x'(t), t)&\le\\ |\ell(x(t), t)-\ell(\bar x, \bar t)|+|\ell(\bar x, \bar t)-\ell(\bar x', \bar t)|+\\+|\ell(\bar x', \bar t)-\ell(x'(t), t)|&<\\ \frac{\varepsilon'-\varepsilon}{2}+\varepsilon+\frac{\varepsilon'-\varepsilon}{2}&=\varepsilon'    
\end{align*}

\end{proof}

Proposition \ref{thm:close_LQR} goes beyond the uniform distribution and holds for any distribution on a finite number of tasks. An example of the MAML objective for five similar LQR tasks is demonstrated in Figure \ref{img:maml_norm}. Similarly to the results for single-task scenario, the statements of this section are generalizable to other tasks with benign optimization landscape. In the end, we conclude that MAML is able to capture the common structure among different tasks given through similar values of the parameters.

\subsection{A number of tasks with common dynamics}

In general, distant tasks may produce a wide variety of landscapes for the MAML objective. However, achieving benign landscape for several non-identical tasks with different norms is still possible. Figure \ref{img:maml_distnat} demonstrates the MAML objective for several LQR tasks that share the same dynamics but have different cost functions. Increasing the number of considered tasks improves the features of the landscape of the total MAML objective. 
In the eleven-task scenario, MAML seems to learn the mean of the optimal policies $W^*$ for the tasks, which means that instead of learning to rapidly adapt, MAML would learn the average policy by effectively finding the minimum of $\frac{1}{|\mathcal T|}\sum_{\tau \in \mathcal T}C_{\tau}(W).$
However, for the two-task scenario, the global minimizer of the MAML objective appears to correspond to a policy $W$ that has $\nabla C_{\tau}(W)$ far from zero for every considered $\tau$, and thus the algorithm that converged to that point would learn to adapt to a task during the meta-testing phase. 

In general, that the average of a large number of functions that each has a global minimizer in a small region of the domain would end up being almost global itself with the minimizer in the same region. Therefore, in practice, increased number of tasks for meta-training may improve the properties of the landscape of the MAML objective and assure convergence to the global minimum.

\begin{figure}
\centering
 
 \begin{subfigure}[b]{0.7\linewidth}
    \includegraphics[width=\linewidth]{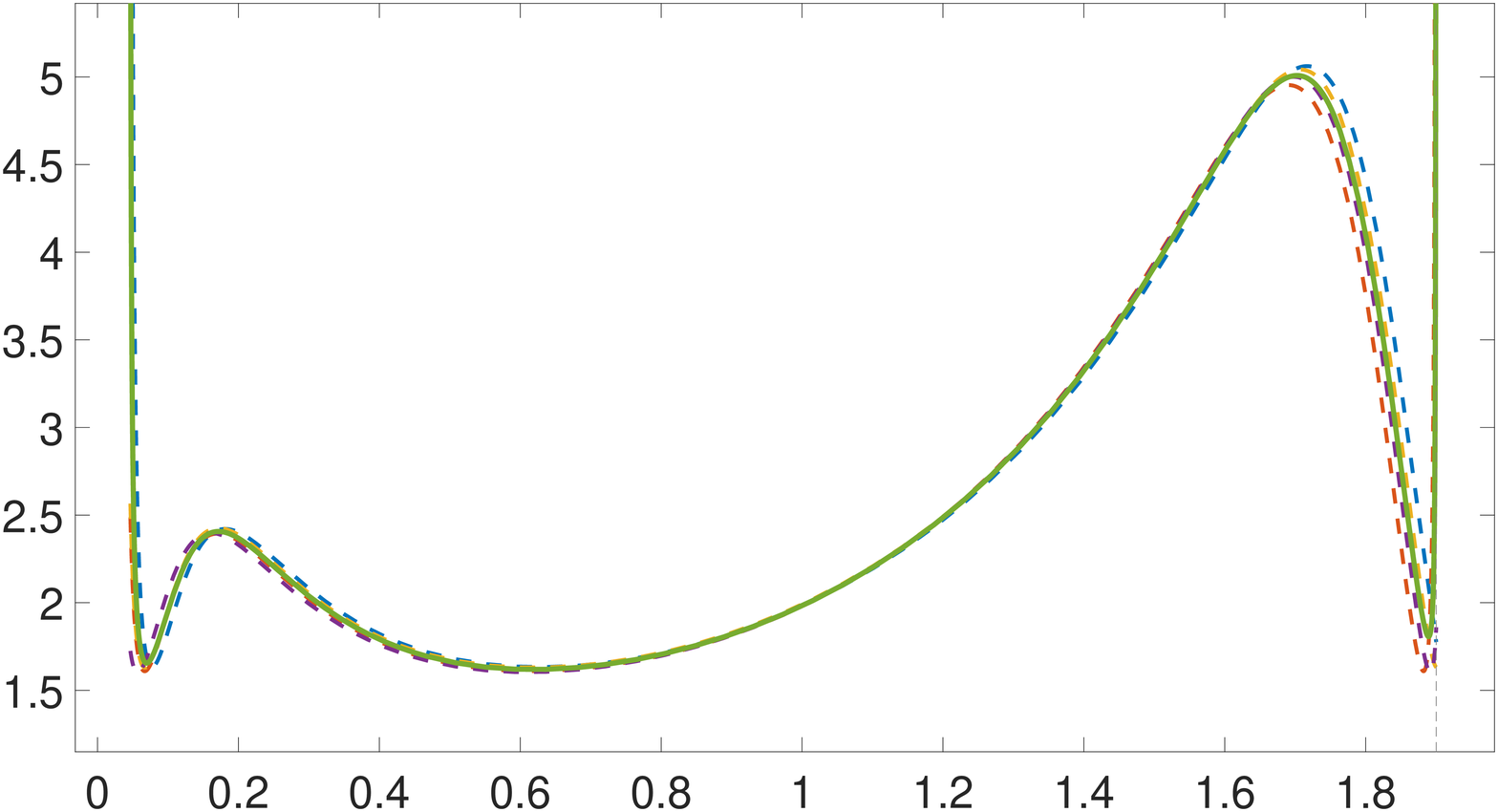}
    \caption{\label{img:maml_norm}}
  \end{subfigure}
    \begin{subfigure}[b]{0.4\linewidth}
    \includegraphics[width=\linewidth]{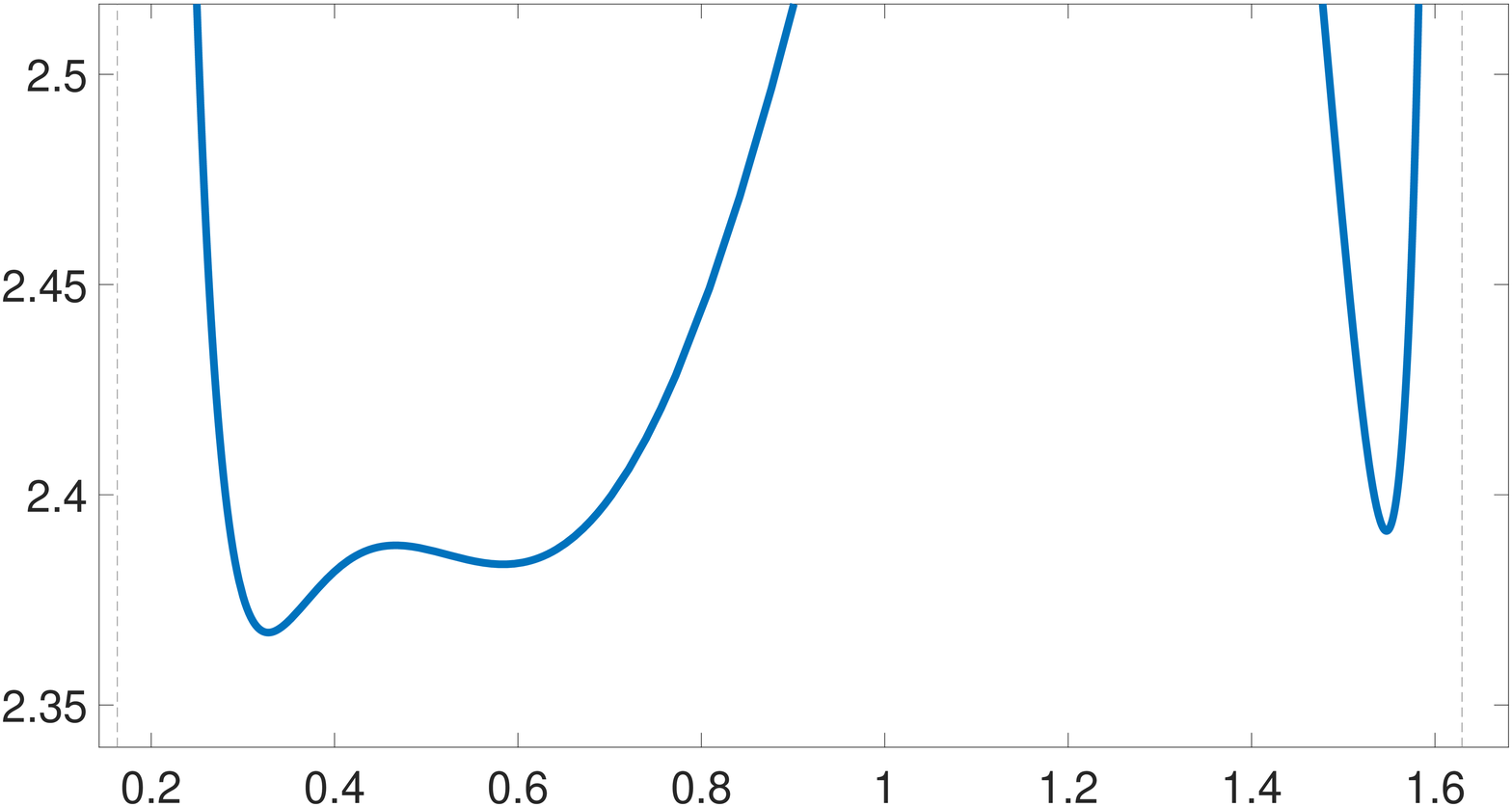}
    \caption{\label{img:distant_afew}}
  \end{subfigure}
  \begin{subfigure}[b]{0.4\linewidth}
    \includegraphics[width=\linewidth]{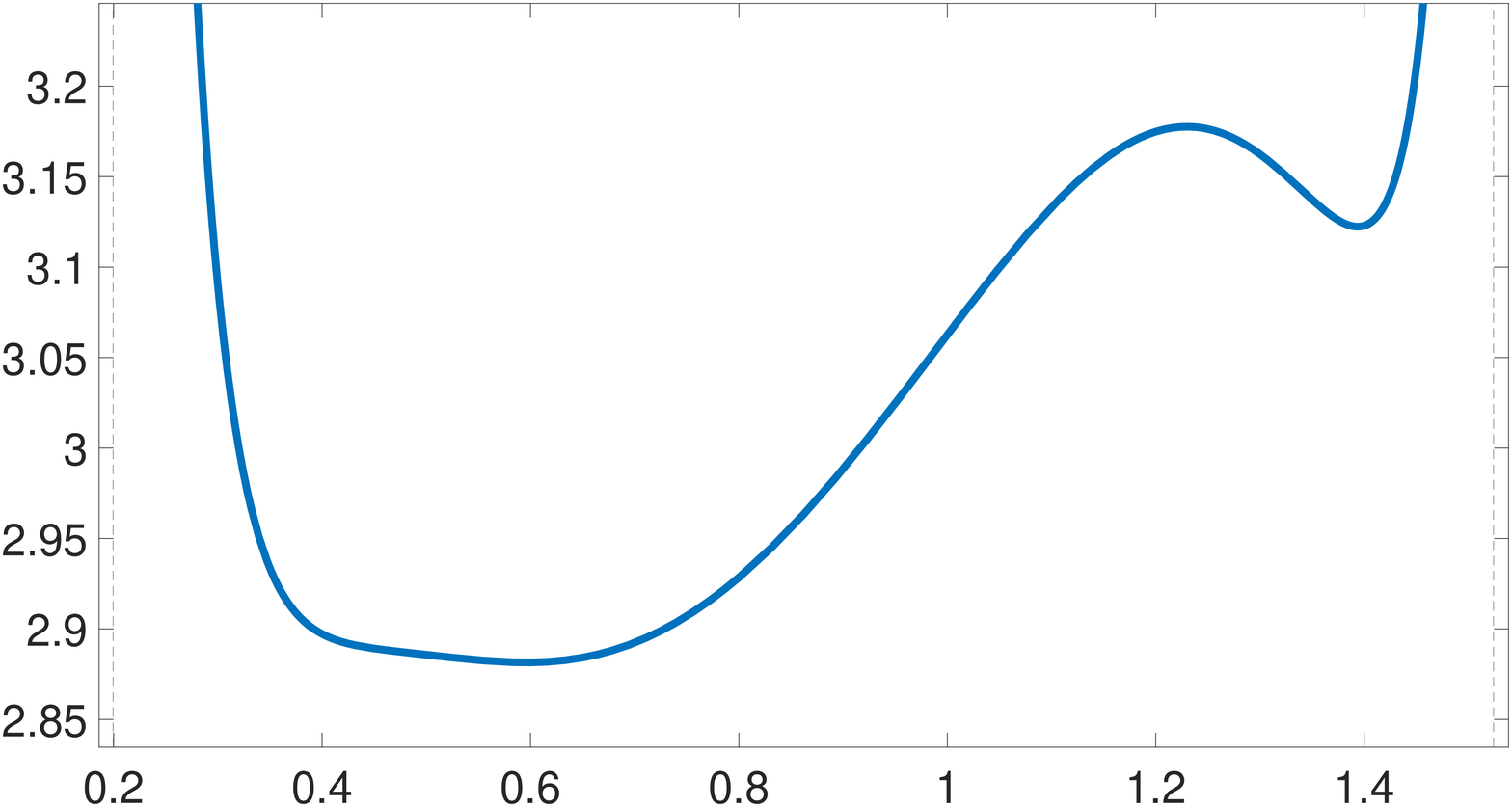}
    \caption{\label{img:distant_several}}
  \end{subfigure}
\begin{subfigure}[b]{0.4\linewidth}
    \includegraphics[width=\linewidth]{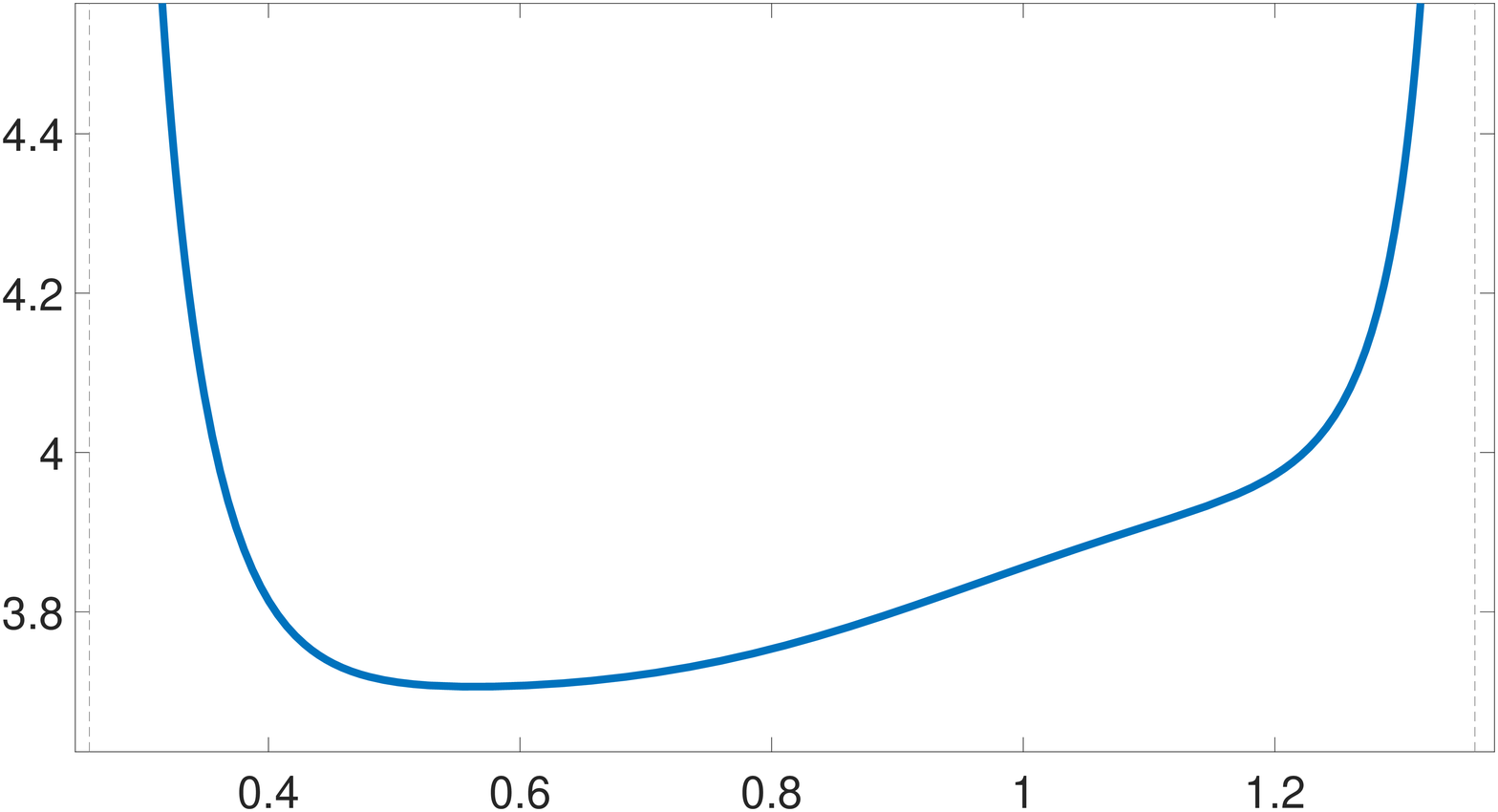}
    \caption{\label{img:distant_many}}
  \end{subfigure}

\caption{\label{img:maml_distnat}
Five MAML objective functions \eqref{eqn:maml_obj} for LQR tasks with similar values of $A,$ $B,$ $Q$ and $R$ (dashed lines) and the MAML objective for the uniform distribution among them (solid line) {\bf (plot \ref{img:maml_norm})}.
MAML objective \eqref{eqn:maml_obj} for the uniform distribution among two {\bf (plot \ref{img:distant_afew})}, five {\bf (plot \ref{img:distant_several})} and eleven {\bf (plot \ref{img:distant_many})} different LQR tasks that share the same dinamics $A$ and $B$ but have different cost matrices $Q$ and $R.$
}
\end{figure}

\section{Conclusion}

In the paper, we studied the objective \eqref{eqn:maml_obj} of the MAML algorithm constructed for different spaces of tasks with some common structure. We have shown that the MAML objective inherits the benign optimization landscape from the underlying tasks if they are similar pointwise and that this desirable property fails to hold for the tasks that coincide up to a multiplication of the objective by a positive value. We proposed a modification of MAML that does not possess this drawback.

During this study of the MAML we used LQR as the primary example, although a number of general results on benign landscapes were discovered, including continuity property for, the newly introduced notion of $\varepsilon$-global function and its properties with respect to convex combinations that are described in the appendix.

\bibliography{refs}

\newpage
$~$
\newpage

\section*{Appendix}

\subsection*{Proof of Theorem \ref{thm:global_maml}}

We will use the properties of the composition operator with open maps to prove the above theorem. First, we need to introduce some preliminary results.
Consider an open subset $\mathcal W$ of a finite-dimensional vector space and a continuous mapping $\mathcal{F}: \mathcal W \to \mathcal Z$ where $\mathcal Z=\text{range}(\mathcal F).$

\begin{definition}
A mapping $\mathcal{F}: \mathcal W \to \mathcal Z$ with $\mathcal Z=\text{range}(\mathcal F)$ is said to be locally open at ${w}$ if for every $\epsilon >0$, there exists $\delta>0$ such that $\mathcal{B}_{\delta}\big(\mathcal{F}({w})\big) \, \subseteq \, \mathcal{F}\big(\mathcal{B}_{\epsilon}({w})\big)$.
\end{definition}
\begin{definition}
A mapping $\mathcal{F}: \mathcal W \to \mathcal Z$ with $\mathcal Z=\text{range}(\mathcal F)$ is said to be open if $\mathcal{F}(U)$ is (relatively) open in $\mathcal{Z}$ for every open set $U \in \mathcal{W}$.
\end{definition}

A mapping $\mathcal F$ is open if and only if it is locally open at every point of its domain.
The following lemma allows us to analyze the local landscape of the composition of a function with a locally open map.

\begin{lemma}[Observation 1 of \citet{nouiehed2018learning}]\label{thm:loc_open}
Suppose that the continuous map $\mathcal{F}: \mathcal W \to \mathcal Z$ with $\mathcal Z=\text{range}(\mathcal F)$ is locally open at $\bar{w}$. If for some $\ell:\mathcal Z \to \mathbb R,$ the point $\bar{w}$ is a local minimum of  $\ell(\mathcal{F}({w}))$ over $\mathcal{W}$, then $\bar{z}=\mathcal{F}(\bar{w})$ is a local minimum of  $\ell({z})$ over $\mathcal{Z}$.
\end{lemma}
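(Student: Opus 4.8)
The plan is to unwind the two hypotheses directly, using nothing beyond the definitions of ``local minimizer'' (from the Notation section) and ``locally open''. Because $\bar w$ is a local minimizer of $\ell\circ\mathcal F$ over $\mathcal W$, there is a radius $\epsilon>0$ with $\ell(\mathcal F(w))\ge\ell(\mathcal F(\bar w))$ for every $w\in\mathcal B_\epsilon(\bar w)$. The goal is then to produce a radius $\delta>0$ around $\bar z=\mathcal F(\bar w)$ on which $\ell$ never drops below $\ell(\bar z)$.

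First I would feed exactly this $\epsilon$ into the local openness of $\mathcal F$ at $\bar w$: by definition this returns $\delta>0$ such that $\mathcal B_\delta(\bar z)=\mathcal B_\delta(\mathcal F(\bar w))\subseteq\mathcal F(\mathcal B_\epsilon(\bar w))$, where the left-hand neighborhood is read in the relative topology of $\mathcal Z=\text{range}(\mathcal F)$, consistent with the convention fixed in the Notation section. Next I would fix an arbitrary $z\in\mathcal B_\delta(\bar z)$; the inclusion supplies a preimage $w\in\mathcal B_\epsilon(\bar w)$ with $\mathcal F(w)=z$, and then $\ell(z)=\ell(\mathcal F(w))\ge\ell(\mathcal F(\bar w))=\ell(\bar z)$. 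Since $z\in\mathcal B_\delta(\bar z)$ was arbitrary, $\bar z$ is a local minimizer of $\ell$ over $\mathcal Z$, which is the claim.

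I do not expect a genuine obstacle: the proof is essentially a one-line chase through the definitions, and continuity of $\mathcal F$ is not even needed at this step (only the local-openness hypothesis is used). The only point worth a moment's attention is the bookkeeping of the two ambient sets — $\mathcal B_\epsilon(\bar w)$ lives in $\mathcal W$ while $\mathcal B_\delta(\bar z)$ must be interpreted in the relative topology of $\mathcal Z$ — but this is exactly what the definition of local openness already encodes, so nothing further is required. Downstream this lemma will be paired with the stronger hypothesis that $\ell$ is $\varepsilon$-global, upgrading the conclusion ``$\bar z$ is a local minimizer of $\ell$'' into the quantitative bound $\ell(\bar z)-\min_{z\in\mathcal Z}\ell(z)\le\varepsilon$ that is invoked later in Lemma \ref{thm:loc_min_open}.
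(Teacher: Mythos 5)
Your proof is correct and is exactly the standard definition-chase: the paper itself does not prove Lemma \ref{thm:loc_open} but imports it by citation as Observation 1 of \citet{nouiehed2018learning}, and your argument (feed the local-minimality radius $\epsilon$ into the local-openness definition to get $\delta$ with $\mathcal B_\delta(\mathcal F(\bar w))\subseteq\mathcal F(\mathcal B_\epsilon(\bar w))$, then pull each $z$ back to a preimage in $\mathcal B_\epsilon(\bar w)$) is precisely the one-line proof behind that observation. Your remarks that continuity of $\mathcal F$ is not needed here and that $\mathcal B_\delta(\bar z)$ is read in the relative topology of $\mathcal Z$, as fixed in the Notation section, are both accurate.
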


 The above observation applied to an $\varepsilon$-global function leads to Lemma \ref{thm:loc_min_open}.

\begin{proof}[Proof of Lemma \ref{thm:loc_min_open}]
By Lemma \ref{thm:loc_open}, $\bar z = \mathcal{F}(\bar w)$ is a local minimizer of $\ell(z).$ Since $\ell$ is $\varepsilon$-global, $\ell(\bar z)-\min_{z\in \mathcal Z}\ell(z)\le \varepsilon.$ Due to $\text{range}(\mathcal F) = \mathcal{Z},$ it follows that $\ell(\mathcal F(\bar w))-\min_{w\in \mathcal W} \ell(\mathcal F(w)) \le \varepsilon.$
\end{proof}

The Inverse Function Theorem provides us with the following corollary, which states that mappings with a nonsingular Jacobian are open.

\begin{lemma}[Theorem 9.25 in \citet{rudin1964principles}]\label{thm:inverseFT}
Let $\mathcal F$ be a continuously differentiable mapping of an open set $\mathcal W\subset \mathbb R^{n}$ into $\mathbb R^n.$ Suppose that the Jacobian $\nabla \mathcal F(x)$ is a nonsingular matrix for all $x\in \mathcal W$. Then, for every open subset $V$ of $\mathcal W$ the set $\mathcal F(\mathcal W)$ is open. In other words, $\mathcal F: \mathcal W\to\text{range}(\mathcal F)$ is an open map.
\end{lemma}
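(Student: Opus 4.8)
The plan is to treat Lemma~\ref{thm:inverseFT} as the standard open-mapping corollary of the Inverse Function Theorem, so that the proof reduces to a short local-to-global argument with the Inverse Function Theorem of \citet{rudin1964principles} invoked as a black box. Recall that the Inverse Function Theorem guarantees that whenever $\mathcal{F}$ is continuously differentiable near a point $x$ and $\nabla\mathcal{F}(x)$ is nonsingular, there is an open neighborhood of $x$ that $\mathcal{F}$ maps diffeomorphically onto an open neighborhood of $\mathcal{F}(x)$; in particular $\mathcal{F}$ carries some open ball about $x$ onto an open set containing $\mathcal{F}(x)$.

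First I would fix an arbitrary open set $V\subseteq\mathcal{W}$ and an arbitrary image point $y\in\mathcal{F}(V)$, writing $y=\mathcal{F}(x)$ with $x\in V$. Since $V$ is open, some open ball $B$ about $x$ lies inside $V$. Applying the Inverse Function Theorem at $x$ (legitimate because $\mathcal{F}$ is continuously differentiable and $\nabla\mathcal{F}(x)$ is nonsingular by hypothesis) yields an open set $U$ with $x\in U\subseteq B\subseteq V$ and $\mathcal{F}(U)$ open in $\mathbb{R}^n$. Then $y=\mathcal{F}(x)\in\mathcal{F}(U)\subseteq\mathcal{F}(V)$, so $\mathcal{F}(V)$ contains an open neighborhood of each of its points and is therefore open in $\mathbb{R}^n$. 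Letting $V$ range over all open subsets of $\mathcal{W}$, and noting that a set open in $\mathbb{R}^n$ is a fortiori open in the subspace $\mathcal{Z}=\range(\mathcal{F})$, shows that $\mathcal{F}:\mathcal{W}\to\range(\mathcal{F})$ is an open map.

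I do not expect a genuine obstacle here: the entire content is carried by the Inverse Function Theorem, and what remains is only the routine observation that a set which contains an open ball around each of its points is open. The single point worth a sentence of care is the distinction between openness in $\mathbb{R}^n$ and openness in the relative topology of $\range(\mathcal{F})$, which is immaterial here because the images produced above are already open in $\mathbb{R}^n$; and the statement should be read with the evident correction that the conclusion ``$\mathcal{F}(\mathcal{W})$ is open'' stands for ``$\mathcal{F}(V)$ is open'' for the chosen open set $V$.
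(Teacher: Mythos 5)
Your proposal is correct: the local-to-global argument (apply the Inverse Function Theorem at each point of an open $V\subseteq\mathcal W$, inside a ball contained in $V$, and conclude $\mathcal F(V)$ contains an open neighborhood of each of its points) is exactly the standard justification, and your reading of ``$\mathcal F(\mathcal W)$ is open'' as ``$\mathcal F(V)$ is open'' is the evident intended correction. Note that the paper does not prove this lemma at all --- it is quoted as Theorem 9.25 of Rudin's book --- so your argument simply supplies the routine derivation that the citation stands in for, including the (immaterial here) distinction between openness in $\mathbb R^n$ and in the relative topology of $\text{range}(\mathcal F)$.
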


Now, we have defined all the necessary machinery for the proof of Theorem \ref{thm:global_maml}.

\begin{proof}
The mapping $\mathcal F(w) = w-\eta \nabla g(w)$ is continiously differentiable over $\mathcal W.$ The Jacobian of this mapping is $\nabla \mathcal F(w) = \mathcal I-\eta \nabla^2 g(w),$ where $\mathcal I$ is the identity matrix. By assumption, we have $\eta< \frac{1}{M},$ and consequently the following inequality holds for all $v\in \mathbb R^n\backslash\{0\}:$ 
\[v^\top[\mathcal I-\eta \nabla^2 g(w)]v\ge (1-\eta \|\nabla^2 g(w)\|)v^\top v>0\] 
This means that $\nabla \mathcal F(w)$ is positive definite for all $w\in \mathcal W,$ and hence is non-singular. By Lemma \ref{thm:inverseFT}, $\mathcal F$ is an open mapping.

The function $f$ is global by assumption, and by Lemma \ref{thm:loc_open} the composition of a global function with an open map is global. Thus, $h$ is global as a composition of $f$ and $\mathcal F,$ and consequently each of its local minimizers is a global minimizer.
\end{proof}

\subsection*{Proof of Theorem \ref{thm:global_maml_LQR}}
\begin{proof}
It is shown in \citet{fallah2019convergence} that MAML converges to a first-order stationary point of a smooth nonconvex loss. Therefore, if the limit point $w^{\star}$ exists, it must be a stationary point of $C(W-\eta \nabla C(W)).$ Consider the first-order stationarity condition for $w^\star:$ 
\begin{align*}
    0=&\nabla C(w^\star-\eta \nabla C(w^\star)) =\\ =&[\mathcal I-\eta \nabla^2C(w^\star) ]^\top\nabla C(W)\big|_{W=w^\star-\eta\nabla C(w^\star)}
\end{align*}
Similarly to the proof of Theorem \ref{thm:global_maml}, $\mathcal I-\eta \nabla^2C(w^\star) $ is a full-rank marix, meaning that $w^\star$ is a first-order stationary point for the MAML objective if and only if $\nabla C(W)\big|_{W=w^\star-\eta\nabla C(w^\star)} = 0.$ 

Theorem 7 of \citet{fazel2018global} states that the Gradient descent algorithm finds an $\varepsilon$-approximation of the global optimum of $C(W)$ in polynomial time for any initial point with a finite value. This directly implies that all first-order stationary points of $C(W)$ are the global minimizers of $C(W)$ because otherwise we could initialize the Gradient descent algorithm at a stationary point and since it converges to the point of initialization, it will lead to a contradiction. Being a first-order stationary point is a sufficient condition of local minimality, and hence guaranties that $C(W)$ is a global function. By Theorem \ref{thm:global_maml}, the MAML objective is global for a sufficiently small $\eta$.

Since $w^\star-\eta\nabla C(w^\star)$ is a first-order stationary point of $C,$ it is a global minimizer of $C$ and since $\min_{W}C(W)\le \min_{W}C(W-\eta\nabla C(W)),$ the point $w^{\star}$ is the global minimizer of the MAML objective.
\end{proof}

\subsection*{Proof of Proposition \ref{thm:close_LQR}}

\begin{lemma}[Theorem 9.28 of the book by \citet{rudin1964principles} (Generalized implicit function theorem)]\label{thm:gift}
Let $\mathcal F$ be a continiously differentiable mapping of an open set $\mathcal E\subset \mathbb R^{n+m}$ into $\mathbb R^n$ such that $\mathcal (\bar x,\bar t)=0$ for some point $(\bar x, \bar t)\in \mathcal E.$ Assume that $\nabla_{x}\mathcal F(\bar x, \bar t)$ is a non-singular matrix. Then, there exist open sets $U\subset \mathbb R^{n+m}$ and $V\subset \mathbb R^m,$ with $(\bar x, \bar t)\in U$ and $\bar t\in V,$ having the following properties
\begin{enumerate}
\item For all $t\in V$ there exists a unique $x=x(t)$ such that $(x, t)\in U$ and $\mathcal F(x,t)=0.$
\item The function $x(t)$ is a continiously differentiable mapping of $V$ into $\mathbb R^n,$ $x(\bar t) = \bar x$ and $\nabla_t x(t) = -[\nabla_x\mathcal F(x(t), t)]^{-1}\nabla_t\mathcal F(x(t), t)$ for all $t \in V.$ 
\end{enumerate}
\end{lemma}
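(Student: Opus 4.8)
The plan is to derive this statement from the Inverse Function Theorem (the local-diffeomorphism companion of Lemma~\ref{thm:inverseFT}) applied to an augmented map, rather than arguing from scratch. First I would introduce $\Phi:\mathcal E\to\mathbb R^{n+m}$ defined by $\Phi(x,t)=(\mathcal F(x,t),\,t)$. Since $\mathcal F$ is continuously differentiable, so is $\Phi$, and its Jacobian at $(\bar x,\bar t)$ is block upper-triangular with diagonal blocks $\nabla_x\mathcal F(\bar x,\bar t)$ and the $m\times m$ identity, so $\det\nabla\Phi(\bar x,\bar t)=\det\nabla_x\mathcal F(\bar x,\bar t)\neq 0$. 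The Inverse Function Theorem then yields open sets $U\ni(\bar x,\bar t)$ and $\widetilde U\ni\Phi(\bar x,\bar t)=(0,\bar t)$ such that $\Phi\big|_U:U\to\widetilde U$ is a $C^1$ diffeomorphism.

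Next I would extract $x(t)$. Set $V=\{t\in\mathbb R^m:(0,t)\in\widetilde U\}$, an open neighborhood of $\bar t$, and for $t\in V$ let $x(t)$ be the $\mathbb R^n$-component of $(\Phi\big|_U)^{-1}(0,t)$. Because $\Phi$ leaves the $t$-coordinate untouched, we have $(\Phi\big|_U)^{-1}(0,t)=(x(t),t)$; applying $\Phi$ gives $\mathcal F(x(t),t)=0$ with $(x(t),t)\in U$, and $x(\bar t)=\bar x$. Uniqueness within $U$ is immediate from injectivity of $\Phi\big|_U$: any $(x',t)\in U$ with $\mathcal F(x',t)=0$ satisfies $\Phi(x',t)=(0,t)=\Phi(x(t),t)$, hence $x'=x(t)$. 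Continuous differentiability of $x(\cdot)$ follows because $x(t)$ is the composition of the $C^1$ map $(\Phi\big|_U)^{-1}$ with the inclusion $t\mapsto(0,t)$ and the projection onto $\mathbb R^n$. Finally, shrinking $U$ (hence $V$) by continuity so that $\nabla_x\mathcal F$ stays invertible on $U$, I would differentiate the identity $\mathcal F(x(t),t)\equiv 0$ by the chain rule, obtaining $\nabla_x\mathcal F(x(t),t)\,\nabla_t x(t)+\nabla_t\mathcal F(x(t),t)=0$, and solve for $\nabla_t x(t)$ to recover the stated formula.

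The main point to be careful about is not analytic depth but bookkeeping: essentially all of the content is carried by the Inverse Function Theorem, so the proof reduces to the one-line block-determinant computation for $\nabla\Phi(\bar x,\bar t)$ and to tracking the nested open sets $U,\widetilde U,V$ so that the uniqueness clause is genuinely relative to $U$ and so that $\nabla_x\mathcal F$ is invertible wherever the derivative formula is invoked. If a self-contained proof were required, the Inverse Function Theorem itself would be the hard part; it can be obtained from the Banach fixed-point theorem applied to the $t$-parameterized contractions $x\mapsto x-[\nabla_x\mathcal F(\bar x,\bar t)]^{-1}\mathcal F(x,t)$ for $t$ near $\bar t$. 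Since the paper already cites \citet{rudin1964principles} for the open-map version in Lemma~\ref{thm:inverseFT}, invoking the full theorem is the natural route here.
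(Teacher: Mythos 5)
Your proposal is correct, and it is essentially the same argument as the cited source: the paper does not prove this lemma but quotes it as Theorem 9.28 of Rudin, whose proof is exactly your construction --- apply the Inverse Function Theorem to the augmented map $\Phi(x,t)=(\mathcal F(x,t),t)$, use the block-triangular Jacobian to get local invertibility, define $x(t)$ from the second component of $(0,t)$ under $\Phi^{-1}$, and obtain the derivative formula by differentiating $\mathcal F(x(t),t)=0$. No gaps; the only point worth keeping explicit, which you already note, is that uniqueness of $x(t)$ is relative to the neighborhood $U$ and that $\nabla_x\mathcal F$ must remain invertible where the formula for $\nabla_t x(t)$ is asserted.
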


\begin{lemma}\label{thm:global_convcomb}
Given $\lambda_i>0$ and $\sum_{i=1}^k\lambda_i = 1,$
if for some $\bar t$ and $\varepsilon>0$ the function $\ell(\cdot, \bar t)$ satisfying Assumption \ref{asm:contin} is $\varepsilon$-global, then for any $\varepsilon'>0$ such that $\varepsilon'>\varepsilon$ there exists $\delta>0$ for which the convex combination $\lambda_1\ell(\cdot, t_1)+\cdots+\lambda_k\ell(\cdot, t_k)$ is $\varepsilon'$-global for all $t_1, \ldots, t_k\in \mathcal B_{\delta}(\bar t)$.
\end{lemma}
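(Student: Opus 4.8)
The plan is to repeat, almost verbatim, the three-step argument behind Theorem~\ref{thm:global_perturb}, but applied to the auxiliary parametric family
\[
L(x;s)\;:=\;\sum_{i=1}^{k}\lambda_i\,\ell(x,s_i),\qquad s=(s_1,\dots,s_k)\in(\mathbb R^{m})^{k},
\]
regarded as a function of $x\in\mathcal X$ depending on the enlarged parameter $s$, and studied near the ``diagonal'' base point $\bar s=(\bar t,\dots,\bar t)$. The reduction works because $\sum_i\lambda_i=1$ forces $L(\cdot,\bar s)=\ell(\cdot,\bar t)$: at $\bar s$ the family $L$ is $\varepsilon$-global, it has the same finite set $\{\bar x_1,\dots,\bar x_N\}$ of first-order stationary points as $\ell(\cdot,\bar t)$, and a non-singular Hessian at each of them. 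Moreover $\nabla_xL(\cdot;s)=\sum_i\lambda_i\nabla_x\ell(\cdot,s_i)$ and $\nabla^2_{xx}L(\cdot;s)=\sum_i\lambda_i\nabla^2_{xx}\ell(\cdot,s_i)$ are continuous in $s$ and collapse to $\nabla_x\ell(\cdot,\bar t)$ and $\nabla^2_{xx}\ell(\cdot,\bar t)$ at $s=\bar s$. Since the dimension of the parameter plays no role in the proof of Theorem~\ref{thm:global_perturb}, these are precisely the data that its argument consumes.

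For each $\bar x_i$, apply the generalized implicit function theorem (Lemma~\ref{thm:gift}) to $\nabla_xL(x;s)=0$ at $(\bar x_i,\bar s)$ --- the required non-singularity of $\nabla^2_{xx}L(\bar x_i;\bar s)=\nabla^2_{xx}\ell(\bar x_i,\bar t)$ is Assumption~\ref{asm:contin} --- obtaining radii $\phi_i,\psi_i>0$ and a continuous branch $x_i(s)$ that is the unique stationary point of $L(\cdot;s)$ in $\mathcal B_{\phi_i}(\bar x_i)$ whenever $s\in\mathcal B_{\psi_i}(\bar s)$ (Step~1). Shrinking to $\phi_i'<\phi_i$, $\psi_i'<\psi_i$ exactly as in Cases~1--3 of the proof of Theorem~\ref{thm:global_perturb}, now with $\nabla^2_{xx}\ell(\cdot,t)$ replaced by $\nabla^2_{xx}L(\cdot;s)$, the sign-definiteness of the Hessian persists on $\mathcal B_{\phi_i'}(\bar x_i)\times\mathcal B_{\psi_i'}(\bar s)$, so $x_i(s)$ has the same type (local minimum / maximum / saddle) as $\bar x_i$; and when $\bar x_i$ is a local minimum, joint continuity of $L$ together with $x_i(\bar s)=\bar x_i$ yields $\delta_i'>0$ with $|L(x_i(s);s)-\ell(\bar x_i,\bar t)|<\tfrac{\varepsilon'-\varepsilon}{2}$ for all $s\in\mathcal B_{\delta_i'}(\bar s)$ (Step~2).

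For Step~3, on the compact set $\mathcal Y=\mathcal X\setminus\bigcup_i\mathcal B_{\phi_i'}(\bar x_i)$ the continuous map $\|\nabla_x\ell(\cdot,\bar t)\|$ is bounded below by some $\xi>0$, so by continuity of $\nabla_xL$ in $s$ there is $\delta''>0$ with $\|\nabla_xL(\cdot;s)\|>\xi/2$ on $\mathcal Y$ for $s\in\mathcal B_{\delta''}(\bar s)$; hence all stationary points of $L(\cdot;s)$ lie in $\bigcup_i\mathcal B_{\phi_i'}(\bar x_i)$ and coincide with the branches $x_i(s)$. Setting $\delta_0=\min\bigl(\{\delta_i'\}_{i=1}^N\cup\{\delta''\}\bigr)$, for $s\in\mathcal B_{\delta_0}(\bar s)$ the local minimizers of $L(\cdot;s)$ are exactly the branches $x_i(s)$ emanating from local minimizers $\bar x_i$ of $\ell(\cdot,\bar t)$. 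Picking $\bar x_j$ to be the global minimizer of $\ell(\cdot,\bar t)$ (so $x_j(s)$ is the global minimizer of $L(\cdot;s)$), for any local minimizer $x_i(s)$ one inserts $\pm\ell(\bar x_i,\bar t)$ and $\pm\ell(\bar x_j,\bar t)$ into $L(x_i(s);s)-\min_xL(x;s)=L(x_i(s);s)-L(x_j(s);s)$ and bounds the two outer differences by $\tfrac{\varepsilon'-\varepsilon}{2}$ each (Step~2) and the middle one, $|\ell(\bar x_i,\bar t)-\ell(\bar x_j,\bar t)|$, by $\varepsilon$ ($\varepsilon$-globality of $\ell(\cdot,\bar t)$), giving $L(x_i(s);s)-\min_xL(x;s)<\varepsilon'$. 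Finally, taking $\delta=\delta_0/\sqrt{k}$ makes $t_1,\dots,t_k\in\mathcal B_\delta(\bar t)$ imply $s=(t_1,\dots,t_k)\in\mathcal B_{\delta_0}(\bar s)$, which proves the lemma.

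The only place where this is not an immediate corollary of Theorem~\ref{thm:global_perturb} --- and the main thing to be careful about --- is that the family $L(\cdot;s)$ need \emph{not} satisfy Assumption~\ref{asm:contin} over all of $(\mathbb R^m)^k$: a convex combination of $\ell(\cdot,t_i)$ with distinct $t_i$ can a priori have infinitely many stationary points, or a singular Hessian at one of them. The resolution is the observation that the proof of Theorem~\ref{thm:global_perturb} never uses the assumption away from the base parameter; it needs only a finite, non-degenerate stationary set there together with continuity in the parameter, all of which $L$ inherits at $\bar s$ from $\ell(\cdot,\bar t)$. Once that is noted, the remainder is a line-by-line transcription with $t$ replaced by $s$ and $\ell$ replaced by $L$, so no genuinely new estimate is required.
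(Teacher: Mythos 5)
Your proof is correct (at the same level of rigor as the paper's own perturbation arguments), but it is executed differently from the paper. The paper treats the lemma as a near-corollary of Theorem~\ref{thm:global_perturb}: for $k=2$ it first invokes Theorem~\ref{thm:global_perturb} to assume WLOG $t_1=\bar t$, then defines the one-parameter auxiliary family $\mathfrak r(x,t)=\lambda\ell(x,\bar t)+(1-\lambda)\ell(x,t)$, observes $\mathfrak r(\cdot,\bar t)=\ell(\cdot,\bar t)$, and applies Theorem~\ref{thm:global_perturb} to $\mathfrak r$ as a black box, asserting that the argument extends to general $k$. You instead form the $k$-fold product-parameter family $L(x;s)=\sum_i\lambda_i\ell(x,s_i)$ around the diagonal point $\bar s=(\bar t,\dots,\bar t)$ and replay the three-step proof of Theorem~\ref{thm:global_perturb} verbatim. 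The underlying idea is the same (the normalization $\sum_i\lambda_i=1$ collapses the combination to $\ell(\cdot,\bar t)$ at the base point, and the stationary structure perturbs stably), but your route buys two things the paper leaves implicit: it handles all $k$ and all $t_i$ simultaneously, avoiding both the unproved ``holds for other $k$'' claim and the uniformity question hidden in the paper's WLOG recentering at $t_1$ (where the resulting $\delta$ a priori depends on $t_1$); and it explicitly addresses the fact that the auxiliary family need not satisfy Assumption~\ref{asm:contin} away from the base parameter, so Theorem~\ref{thm:global_perturb} cannot literally be cited as stated --- your observation that its proof only consumes the finite, nondegenerate stationary set at the base parameter plus continuity in the parameter is exactly the justification the paper's shorter reduction tacitly relies on. The cost is that you repeat the perturbation argument rather than quoting it.
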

\begin{proof}
We provide the proof for $k=2,$ but the argument holds true for other finite values of $k$. By Theorem \ref{thm:global_perturb}, $\ell(\cdot, t_1)$ can be assumed to be $\varepsilon''$-global for $\varepsilon'>\varepsilon''>\varepsilon.$ Therefore, without loss of generality, we assume that $t_1=\bar t,$ and then aim to prove that for a given $\lambda\in [0,1]$ there exists $\delta>0$ such that $\lambda \ell(x,\bar t)+(1-\lambda) \ell(x, t)$ is $\varepsilon'$-global.

We introduce $\mathfrak r(x,t) = \lambda \ell(x,\bar t)+(1-\lambda) \ell(x, t)$ and note that $\mathfrak r(x,\bar t) = \ell(x,\bar t),$ which means that $\mathfrak r(\cdot,\bar t)$ is $\varepsilon$-global and satisfies Assumption \ref{asm:contin}. Theorem \ref{thm:global_perturb} applied to $\mathfrak r$ yields that there exists $\delta>0$ such that $\lambda \ell(x,\bar t)+(1-\lambda) \ell(x, t)$ is $\varepsilon'$-global.
\end{proof}

\begin{proposition}\label{thm:uniform}
Given $\lambda_i>0$ and $\sum_{i=1}^k\lambda_i = 1,$
if for all the values of $t$ in a compact set $\mathcal C\subset \mathbb R^m$ and some $\varepsilon>0$ the function $\ell(\cdot, t)$ satisfying Assumption \ref{asm:contin} (restricted from $\mathbb R^m$ to $\mathcal{C}$) is $\varepsilon$-global, then for any $\varepsilon'>0$ such that $\varepsilon'>\varepsilon$ there exists $\delta>0$ for which any convex combination $\lambda_1\ell(\cdot, t_1)+\cdots+\lambda_m\ell(\cdot, t_m)$ is $\varepsilon'$-global for all $t_1, \ldots, t_m$ such that $\|t_i-t_j\|<\delta.$
\end{proposition}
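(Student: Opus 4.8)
The plan is a finite-subcover (Lebesgue-number) argument that upgrades the pointwise statement of Lemma \ref{thm:global_convcomb} to one that is uniform over the compact parameter set $\mathcal{C}$. Fix the target tolerance $\varepsilon' > \varepsilon$ and the weight vector $(\lambda_1,\ldots,\lambda_k)$. For each center $\bar{t}\in\mathcal{C}$, the function $\ell(\cdot,\bar{t})$ is $\varepsilon$-global and $\ell$ satisfies Assumption \ref{asm:contin} (in the form restricted to $\mathcal{C}$), so Lemma \ref{thm:global_convcomb}, applied with this fixed $\varepsilon'$ and these fixed weights, yields a radius $\delta_{\bar{t}}>0$ such that every convex combination $\lambda_1\ell(\cdot,s_1)+\cdots+\lambda_k\ell(\cdot,s_k)$ with $s_1,\ldots,s_k\in\mathcal{B}_{\delta_{\bar t}}(\bar{t})$ is $\varepsilon'$-global.

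Next I would cover $\mathcal{C}$ by the open balls $\{\mathcal{B}_{\delta_{\bar t}/2}(\bar t):\bar t\in\mathcal{C}\}$, extract a finite subcover with centers $\bar t_1,\ldots,\bar t_N$ by compactness, and set $\delta:=\frac{1}{2}\min\{\delta_{\bar t_1},\ldots,\delta_{\bar t_N}\}>0$. To see that this $\delta$ works, take any $t_1,\ldots,t_k\in\mathcal{C}$ with $\|t_i-t_j\|<\delta$ for all $i,j$. Since $t_1\in\mathcal{C}$, it lies in some ball $\mathcal{B}_{\delta_{\bar t_j}/2}(\bar t_j)$ of the subcover, and then for every $i$ the triangle inequality gives $\|t_i-\bar t_j\|\le\|t_i-t_1\|+\|t_1-\bar t_j\|<\delta+\delta_{\bar t_j}/2\le\delta_{\bar t_j}$. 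Hence all of $t_1,\ldots,t_k$ lie in $\mathcal{B}_{\delta_{\bar t_j}}(\bar t_j)$, and the defining property of $\delta_{\bar t_j}$ gives that $\lambda_1\ell(\cdot,t_1)+\cdots+\lambda_k\ell(\cdot,t_k)$ is $\varepsilon'$-global, which is the claim.

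The argument is routine once Lemma \ref{thm:global_convcomb} is available; there is no single hard step, only one point that needs care. Lemma \ref{thm:global_convcomb} (and through it Theorem \ref{thm:global_perturb}) must be legitimately applicable at \emph{every} center $\bar t\in\mathcal{C}$, including points on the boundary of $\mathcal{C}$ where no open neighborhood stays inside $\mathcal{C}$ — this is exactly why the hypothesis demands $\varepsilon$-globality at all of $\mathcal{C}$ and why Assumption \ref{asm:contin} is invoked in its $\mathcal{C}$-restricted form rather than over an open set. A minor secondary point is that the radius $\delta_{\bar t}$ produced by Lemma \ref{thm:global_convcomb} a priori depends on the weights $\lambda_i$; since the weight vector is fixed throughout the proposition, one simply threads it through every invocation and this dependence is harmless.
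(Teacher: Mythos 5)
Your proof is correct, and it rests on the same key ingredient as the paper's — Lemma \ref{thm:global_convcomb} applied at each point of $\mathcal{C}$ — but the compactness step is executed differently. You argue directly: cover $\mathcal{C}$ by half-radius balls supplied by the lemma, pass to a finite subcover, and take $\delta$ as half the smallest radius, so that any cluster of parameters with pairwise distances below $\delta$ is swallowed by a single ball on which the lemma applies; the triangle-inequality bookkeeping you give is right. The paper instead argues by contradiction: it assumes a sequence of counterexample clusters with pairwise distances $\delta_l \to 0$, uses sequential compactness of $\mathcal{C}$ to extract a limit point $\bar t$ of the first coordinates, and then invokes Lemma \ref{thm:global_convcomb} at $\bar t$ to contradict the assumed failure for large $l$. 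The two routes are logically equivalent uses of compactness; yours has the minor advantages of being constructive (it exhibits $\delta$ as a minimum over a finite subcover) and of making the quantifier structure explicit, while the paper's is shorter to write down but nonconstructive. Your two flagged caveats — that the lemma must be applicable at boundary points of $\mathcal{C}$, which is exactly why Assumption \ref{asm:contin} is taken in its $\mathcal{C}$-restricted form with relative neighborhoods, and that $\delta_{\bar t}$ may depend on the fixed weights $\lambda_i$, which is harmless — are both apt and are handled the same way (implicitly) in the paper's argument.
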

\begin{proof}
To prove by contradiction, suppose that there exists $\varepsilon'>0$ such that $\varepsilon'>\varepsilon$ and for all $\delta>0$ there are $t_1(\delta), \ldots, t_k(\delta)$ such that $\|t_i-t_j\|<\delta$ and $\lambda_1\ell(\cdot, t_1)+\ldots+\lambda_m\ell(\cdot, t_m)$ is not $\varepsilon'$-global. From the sequence $t_1(\frac{1}{l}),$ one can extract a converging sub-sequence $t_1(\delta_l)$ since $\mathcal C$ is compact. By Lemma \ref{thm:global_convcomb}, for the point $\bar t=\lim_{l\to \infty}t_1(\delta_l)$ there exists $\delta'>0$ such that for all $t_1, \ldots, t_k\in \mathcal B_{\delta'}(\bar t)$ the convex combination $\lambda_1\ell(\cdot, t_1)+\ldots+\lambda_k\ell(\cdot, t_k)$ is $\varepsilon'$-global. Therefore, for the $t_1(\delta_l), \ldots, t_k(\delta_l)$ such that $\delta_l<\frac{\delta'}{2k}$ and $\|t_1(\delta_l)-\bar t\|<\frac{\delta'}{2k},$ this convex combination is $\varepsilon'$-global, which is a contradiction.
\end{proof}

\begin{proposition}[Formal statement of Proposition \ref{thm:close_LQR}]\label{thm:close_LQR_re}
Consider the instances of LQR such that their parameters $A, B, Q$ and $R$ belong to a compact set. Assume that none of them produces a cost function $C(W)$ with a singular Hessian at a stationary point.
For any $\varepsilon>0$ and $k\in \mathbb N,$ there exists $\delta>0$ such that for any set of $k$ considered instances of LQR defined through the parameters $\{(A_i, B_i, Q_i, R_i)\}_{i=1}^k$ with
\[\|A_i-A_j\|+\|B_i-B_j\|+\|Q_i-Q_j\|+\|R_i-R_j\|\le \delta,\]
for all $i,j\in \{1\ldots m\},$ the MAML objective \eqref{eqn:maml_obj} is $\varepsilon$-global.
\end{proposition}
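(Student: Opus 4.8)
The plan is to view the $k$-task MAML objective as a convex combination of single-task MAML objectives and then invoke the uniform perturbation bound of Proposition \ref{thm:uniform}. Parametrize a task by $t=(A,B,Q,R)$ ranging over the given compact parameter set, call it $\mathcal C$, and set $\ell(W,t)=C_t\!\left(W-\eta\nabla C_t(W)\right)$, the single-task MAML objective of task $t$. For a finite collection of tasks $t_1,\dots,t_k$ carrying a distribution $\omega_1,\dots,\omega_k$, the MAML objective \eqref{eqn:maml_obj} is exactly $\sum_{i=1}^k\omega_i\,\ell(\cdot,t_i)$, a convex combination of the $\ell(\cdot,t_i)$. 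Thus, once I know that $\ell(\cdot,t)$ is global for every $t\in\mathcal C$ and that $\ell$ meets the hypotheses of Proposition \ref{thm:uniform}, that proposition (applied with $\varepsilon=0$ and target accuracy any $\varepsilon>0$) yields a $\delta$ making every such convex combination $\varepsilon$-global whenever the parameters are within $\delta$ of one another; since all norms on a finite-dimensional space are equivalent, this is the same as the sum-of-norms closeness in the statement.

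First I would pin down the domain, which is the real work. Because $C_t(W)$ is finite only on the open, $t$-dependent set of stabilizing controllers and diverges as $W$ approaches its boundary, while the sublevel sets of $C_t$ are, as shown by \citet{fazel2018global}, confined to a bounded region uniformly over the compact set $\mathcal C$, one can fix a single compact set $\mathcal X$ of stabilizing controllers that contains, for every $t\in\mathcal C$, the optimal controller $W^*_t$ together with all stationary points of $C_t$ and of $\ell(\cdot,t)$, all of them bounded away from $\partial\mathcal X$; in particular, restricting the MAML objective to $\mathcal X$ neither loses nor creates a local minimum. On $\mathcal X$ the assignment $(W,t)\mapsto C_t(W)$ is jointly smooth, and choosing $\eta<1/M$ with $M:=\sup_{t\in\mathcal C}\sup_{W\in\mathcal X}\|\nabla^2 C_t(W)\|<\infty$ makes $\mathcal F_t(W):=W-\eta\nabla C_t(W)$ a local diffeomorphism of $\mathcal X$, exactly as in the proof of Theorem \ref{thm:global_maml}.

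Next I would verify Assumption \ref{asm:contin} for $\ell$ on $\mathcal X\times\mathcal C$. Joint continuity of $\ell$, $\nabla_W\ell$ and $\nabla^2_{WW}\ell$ in $(W,t)$ is inherited from that of $C_t$. As in the proof of Theorem \ref{thm:global_maml_LQR}, $\nabla_W\ell(W,t)=[\mathcal I-\eta\nabla^2 C_t(W)]^\top\nabla C_t(\mathcal F_t(W))$ with $\mathcal I-\eta\nabla^2 C_t(W)$ nonsingular, so $W$ is stationary for $\ell(\cdot,t)$ iff $\mathcal F_t(W)$ is stationary for $C_t$, i.e.\ (since every stationary point of the LQR cost is its global minimizer) $\mathcal F_t(W)=W^*_t$; the fiber $\mathcal F_t^{-1}(W^*_t)\cap\mathcal X$ is discrete, being the preimage of a point under a local diffeomorphism, and compact, hence finite, so $\ell(\cdot,t)$ has finitely many stationary points. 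At such a point the Hessian term proportional to $\nabla C_t(\mathcal F_t(W))$ drops out, leaving $\nabla^2_{WW}\ell=[\mathcal I-\eta\nabla^2 C_t(W)]^\top\nabla^2 C_t(W^*_t)[\mathcal I-\eta\nabla^2 C_t(W)]$, which is nonsingular because $\nabla^2 C_t(W^*_t)$ is nonsingular by the standing assumption of the proposition (this is exactly what the informal "almost any set $\mathcal T$" rules in). Hence Assumption \ref{asm:contin} holds; Theorem \ref{thm:global_maml_LQR} gives globality of $\ell(\cdot,t)$ for each $t\in\mathcal C$, and Proposition \ref{thm:uniform} closes the argument.

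I expect the main obstacle to be the domain bookkeeping of the second step: the LQR cost is coercive only on the parameter-dependent open set of stabilizing controllers, so one must argue, uniformly over the compact parameter set, that a single compact $\mathcal X$ with a positive margin to the boundary captures all minimizers and stationary points of $C_t$ and of $\ell(\cdot,t)$ and that passing to $\mathcal X$ neither discards a local minimum of the MAML objective nor introduces a spurious one on $\partial\mathcal X$. The finiteness-of-stationary-points clause of Assumption \ref{asm:contin} is the other delicate point, but it reduces, as indicated above, to the local-diffeomorphism property of $\mathcal F_t$ together with compactness of $\mathcal X$ and the nondegeneracy hypothesis.
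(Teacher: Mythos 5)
Your proposal is correct and follows essentially the same route as the paper: view the multi-task MAML objective as a convex combination of the single-task MAML objectives $\ell(W,t)=C_t(W-\eta\nabla C_t(W))$, verify Assumption \ref{asm:contin}, and invoke Proposition \ref{thm:uniform} (with single-task globality from Theorem \ref{thm:global_maml_LQR}). You actually spell out the steps the paper's one-line proof glosses over (the compact domain of stabilizing controllers, finiteness of stationary points, and nonsingularity of the Hessian of $\ell$), which is a faithful elaboration rather than a different argument.
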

\begin{proof}
Take $t=(A,B,Q,R).$ By Theorem \ref{thm:global_maml}, every function $C(W, t)$ satisfies Assumption \ref{asm:contin}. Therefore, $C(W, t)$ satisfies all the assumptions of Proposition \ref{thm:uniform} and the proof follows immediately.
\end{proof}

Given a matrix $\bar W$, the system of equations 
\begin{equation}\label{eqn:singhess}
\begin{cases}
\nabla C(W)=0 \\
\text{det}\left(\nabla^2 C(W)\right) = 0
\end{cases}    
\end{equation}
is satisfied by the parameters $A, B, Q$ and $R$ that correspond to LQRs with $\bar W$ as a stationary point with singular Hessian. All the LQR tasks that satisfy this system of equations are denoted by $HS(W).$ All the systems that have at least one stationary point with a singular Hessian are contained in $\cup_W HS(W).$ Now, notice that the union occurs over an $n\times n$ dimensional space, while $HS(W)$ is defined with a system of $(n\times n)+1$ equations. This implies that $\cup_W HS(W)$ is still a low-dimensional manifold and thus almost no LQR has a stationary point with a singular Hessian. Thus, the compact domain that is mentioned in the assumption of Proposition \ref{thm:close_LQR_re} can be a large closed set with $\cup_W HS(W)$ excluded together with its small neighborhood. Thus, Proposition \ref{thm:close_LQR} is an informal restatement of Proposition~\ref{thm:close_LQR_re}.

\subsection* {Details on the experiments}

For the numerical experiments, we computed the MAML objective explicitly using the formulas provided in the Section on LQR.

To simplify the visualization, all of the examples and counterexamples in the paper were given for one-dimensional LQR systems. As a result, the parameters $A, B,Q$ and $R$ were scalar values, and so were the state and the action. The initial state $s_0$ was chosen to be deterministic. For reproducibility, the table below collects the parameters used to construct each of the examples.

\begin{center}\label{tab:experiment}
\begin{tabular}{ | c | c | c | } 
\hline
Figure & LQR ($A, B, Q, R, s_0 s_0^\top$) & $\eta$ \\ 
\hline
\hline
Figure \ref{img:maml_single_task} & $(1, 1, 2, 2, 1)$ & $0.01$ \\ 
\hline
Figure \ref{img:maml_two_tasks} & $(1, 1, 2, 2, 1), (1, 1, 0.1, 0.1, 1)$ & $0.01$ \\ 
\hline
Figure \ref{img:maml_norm} &
\shortstack{$(1.01, 1, 1, 1, 1),$ $(1, 1.01, 1, 1, 1),$\\ $(1, 1, 1.01, 1, 1),$ $(0.99, 1, 1, 1, 1)$} & $0.01$ \\ 
\hline
Figure \ref{img:distant_afew} & $(1, 1, 1, 2, 1),$ $(1, 1, 2, 1, 1)$ & $0.1$ \\ 
\hline
Figure \ref{img:distant_several} & \shortstack{$(1, 1, 1, 1, 1),$ $(1, 1, 1, 2, 1),$\\ $(1, 1, 2, 1, 1),$ $(1, 1, 2, 3, 1),$\\ $(1, 1, 3, 2, 1)$} 
& $0.1$ \\
\hline
Figure \ref{img:distant_many} & \shortstack{$(1, 1, 1, 1, 1),$ $(1, 1, 1, 2, 1),$\\ $(1, 1, 2, 1, 1),$ $(1, 1, 2, 3, 1),$\\ $(1, 1, 3, 2, 1),$ $(1, 1, 3, 1, 1),$\\ $(1, 1, 1, 3, 1),$ $(1, 1, 4, 1, 1),$\\ $(1, 1, 1, 4, 1),$ $(1, 1, 5, 3, 1),$\\ $(1, 1, 3, 5, 1)$} & $0.1$ \\ 
\hline
\end{tabular}
\end{center}

\end{document}